\newcommand{\rsqrt}[1]{#1^{-\frac{1}{2}}}
\newcommand{\RR}{I\!\!R} 
\newcommand{\NN}{I\!\!N} 
\newcommand{\CC}{I\!\!\!\!C} 
\DeclareMathOperator{\Diag}{Diag}
\DeclareMathOperator{\sgn}{sgn}
\newtheorem{theorem}{Theorem}
\newtheorem{corollary}{Corollary}
\newcommand{\xmark}{\ding{53}}%
\newcommand{\ii}{\mathbf{i}}
\newcommand{\signum}{SigMaNet}
\newcommand{\rete}{SigMaNet}
\newcommand{\signumlaplacian}{Sign-Magnetic Laplacian}
\newcommand{\laplaciano}{Sign-Magnetic Laplacian}
\def\checkmark{\tikz\fill[scale=0.4](0,.35) -- (.25,0) -- (1,.7) -- (.25,.15) -- cycle;}
\title{SigMaNet: One Laplacian to Rule Them All}
\author{
    Stefano Fiorini,\textsuperscript{\rm 1}
    Stefano Coniglio,\textsuperscript{\rm 2}
    Michele Ciavotta,\textsuperscript{\rm 1}
    Enza Messina\textsuperscript{\rm 1}
}
\date{%
\small
    $^1$University of Milano-Bicocca, Milan, Italy\\%
    $^2$University of Bergamo, Bergamo, Italy\\[2ex]%
        stefano.fiorini@unimib.it, stefano.coniglio@unibg.it, michele.ciavotta@unimib.it, enza.messina@unimib.it
}
\begin{document}

\maketitle

\begin{abstract}
  This paper introduces \rete{}, a generalized Graph Convolutional Network (GCN) capable of handling both undirected and directed graphs with weights not restricted in sign nor magnitude. The cornerstone of \rete{} is the \textit{\laplaciano{}} ($L^{\sigma}$), a new Laplacian matrix that we introduce {\em ex novo} in this work.
$L^{\sigma}$ allows us to bridge a gap in the current literature by extending the theory of spectral GCNs to (directed) graphs with both positive and negative weights. $L^{\sigma}$ exhibits several desirable properties not enjoyed by other Laplacian matrices on which several state-of-the-art architectures are based, among which encoding the edge direction and weight in a clear and natural way that is not negatively affected by the weight magnitude.
$L^{\sigma}$ is also completely parameter-free, which is not the case of other Laplacian operators such as, e.g., the \textit{Magnetic Laplacian}.
The versatility and the performance of our proposed approach is amply demonstrated via computational experiments. Indeed, our results show that, for at least a metric, \rete{} achieves the best performance in 15 out of 21 cases and either the first- or second-best performance in 21 cases out of 21, even when compared to architectures that are either more complex or that, due to being designed for a narrower class of graphs, should---but do not---achieve a better performance.

\end{abstract}




\section{Introduction}\label{sec:intro}

The dramatic advancements of neural networks and deep-learning have provided researchers and practitioners with extremely powerful analytics tools. 
Increasingly complex phenomena and processes which can often be modeled as graphs or networks, such as, e.g., social networks~\citep{backstrom2011supervised}, knowledge graphs~\citep{zou2020survey}, protein interaction networks~\citep{kashyap2018protein}, or the World Wide Web (only to mention a few) can now be successfully addressed via Graph Convolutional Networks (GCNs).
Compared with other approaches, GCNs effectively manage to represent the data and its interrelationships by explicitly capturing the topology of the underlying graph within a suitably-designed convolution operator.

In the literature, GCNs mostly belong to two categories: spectral based and spatial based~\citep{wu2020comprehensive}.
Spatial GCNs define the convolution operator as a localized aggregation operator~\citep{wang2019dynamic} (although, rigorously speaking, such an operator may not always be called a convolution operator from a mathematical perspective).
Differently, spectral GCNs define the convolution operator (a rigorous one in this case) as a function of the eigenvalue decomposition of the Laplacian matrix associated with the graph~\citep{kipf2016semi}.
In their basic definition, both methods assume the graph to be undirected and to feature nonnegative weights. 
For a comprehensive review, we refer the reader to~\citep{zhang2019graph, wu2020comprehensive, he2022pytorch}. 

Many real-world processes and phenomena can be modeled as directed graphs. While spatial GCNs have a natural extension to directed graphs, most spectral methods require the graph to be undirected and to feature nonnegative weights for their convolution operator to be well-defined. Indeed, due to being based on graph signal processing~\citep{sandryhaila2013discrete, chen2015discrete}, spectral GCNs require three fundamental properties to be satisfied which fail to hold when the graph is directed and/or features negative weights:
\textit{a)} the Laplacian matrix must be diagonalizable, i.e., it must admit an eigenvalue decomposition, 
\textit{b)} the Laplacian matrix must be positive semidefinite, and \textit{c)} the
spectrum of the normalized Laplacian matrix must be upper-bounded by 2~\citep{kipf2016semi,wu2020comprehensive}.

In recent years, several extensions of the definition of Laplacian matrix have been proposed to overcome the first limitation, i.e., to handle directed graphs (see, e.g., the {\em Magnetic Laplacian}~\citep{zhang2021magnet, zhang2021smgc} and the {\em Approximate Digraph Laplacian} constructed via the PageRank matrix~\citep{Tong2020}).
Differently, no spectral techniques have been introduced so far to overcome the second limitation, i.e., to handle graphs with edge weights unrestricted in sign, which arise in many relevant applications (such as, e.g., those where the graph models credit/debit transactions between customers, like/dislike evaluations among users, or positive/negative user opinions).
This paper aims at overcoming such a major limitation.

\paragraph{Main contributions and novelty of the work}
\begin{itemize}
\item We extend the theory of spectral-based GCNs by introducing \signum{}, the first spectral GCN capable of handling both directed and undirected graphs with weights unrestricted in sign and of arbitrary magnitude.
\item We introduce the {\em \signumlaplacian{}} matrix $L^{\sigma}$, a new Laplacian matrix (which can be of independent interest beyond the scope of this paper) upon which \rete{} is built. We show that $L^{\sigma}$ satisfies all the properties that are needed for the definition of a convolution operator, among which being positive semidefinite regardless of the sign and magnitude of the edge weights.
  $L^{\sigma}$ also exhibits useful structural properties, among which being positively homogeneous (thus being proportional to the magnitude of the graph weights) and allowing for a natural interpretation of the weight sign in terms of the edge direction---two properties that other Laplacian matrices do not enjoy. Compared with other proposals, the definition of $L^{\sigma}$ is also parameter-free.
\item We experiment with \rete{} on four tasks using both real-world and synthetically-generated datasets. For the first task, we consider graphs with positive and negative edge weights, for which \rete{} is the only spectral GCN that can be adopted, and find it to outperform the (more complex) state-of-the-art networks in 3 cases out of 5. For the other three tasks, we consider graphs with nonnegative weights, which allows us to compare \rete{} with the other state-of-the-art spectral GCNs. While these GCNs are designed solely for graphs with nonnegative weights and, thus, one may expect that the wider applicability of \rete{} may come at the price of an inferior performance, our experiments show that this is not the case, as \rete{} is found to outperform the other GCNs in 11 cases out of 16. Across all four tasks, \rete{} is consistently either the best performing or the second-best performing method according to at least a metric.
\end{itemize}


\section{Preliminaries and previous works}\label{sec:preliminaries}

For a given $n \in \NN$, we denote by $[n]$ the set of integers $\{1, \dots, n\}$. For a given matrix $M$ of appropriate dimensions with real eigenvalues, we denote its largest eigenvalue by $\lambda_\text{max}(M)$. Throughout the paper, $e$ and $e e^\top$ denote the all-one vector and matrix of appropriate dimensions. Undirected and directed graphs are denoted by $G = (V,E)$, where $V$ is the set of vertices and $E$ the set of edges. In the undirected (directed) case, $E$ is a collection of unordered (ordered) pairs of elements in $V$. $G$ is always assumed to be self-loop free.

\subsection{Generalized convolution matrices}\label{subsec:generalized-convolution}

Let $M \in \CC^{n \times n}$, $n \in \NN$, be a positive semidefinite Hermitian matrix\footnote{A matrix is called Hermitian if its real part is symmetric and its imaginary part skew-symmetric.} with eigenvalue (or spectral) decomposition $M = U \Lambda U^*$,
where $\Lambda \in \RR^{n \times n}$ is the diagonal matrix whose elements are the (real, as $M$ is Hermitian) eigenvalues of $M$, $U \in \CC^{n \times n}$, and $U^*$ is $U$'s complex conjugate.
For each $i \in [n]$, the $i$-th column of $U$ coincides with the $i$-th eigenvector of $M$ corresponding to its $i$-th eigenvalue $\Lambda_{ii}$. The columns of $U$ form a basis of $\CC^n$. We assume $\lambda_\text{max}(M) \leq 2$.

Given a {\em signal} $x \in \CC^n$, let $\widehat{x}$ be its discrete Fourier transform 
with basis $U$, i.e., $\widehat{x} = U^* x$.
As $U^{-1} = U^*$, the transform is invertible and the inverse transform of $\widehat{x}$ reads $x = U \widehat{x}$.
Given a {\em filter} $y \in \CC^n$, the transform of its convolution with $x$ satisfies the relationship $\widehat{y * x} = \widehat{y} \odot \widehat{x} = \Diag(\widehat{y}) \widehat{x}$, where $*$ and $\odot$ denote the convolution and the Hadamard (or component-wise) product, respectively.
Applying the inverse transform, we have $y * x = U \Diag(\widehat{y}) U^* x$. Letting $\Sigma := \Diag(\widehat{y})$, we call a {\em generalized convolution matrix} the matrix $Y := U \Sigma U^*$, as $y * x = Y x$. 

Let $\tilde \Lambda := \frac{2}{\lambda_\text{max}(M)} \Lambda - I$ be the normalization of $\Lambda$. As $U U^* = I$, the same normalization applied to $M$ leads to $\tilde M =  U \tilde \Lambda U^* = \frac{2}{\lambda_\text{max}} M - I$.
%
Following~\cite{hammond2011wavelets,kipf2016semi}, we assume that $y$ is such that the entries of $\widehat{y}$ are real-valued polynomials in $\tilde \Lambda$, i.e., that $\widehat{y}_i = \sum_{k=0}^K \theta_k T_k (\tilde \lambda_i), i \in [n]$, where $\theta_0, \dots, \theta_K\in \RR$, $K \in \NN$, and $T_k$ is the Chebyshev polynomial of the first kind of order $k$. $T_k$ is recursively defined as $T_0(x) = 1, T_1(x) = x$, and $T_k(x) = 2x T_{k-1}(x) - T_{k-2}(x)$ for $k \geq 2$, with $x \in \RR \cap [-1,1]$.
Thus, we rewrite $\Sigma$ as $\Sigma = \Diag(\widehat{y}) = \sum_{k=0}^K \theta_k T_k(\tilde \Lambda)$, where $T_k$ is applied component-wise to $\tilde \Lambda$, i.e., $(T_k(\tilde \Lambda))_{ij} = T_k(\tilde \Lambda_{ij})$ for all $i,j \in [n]$. With this, the convolution of $x$ by $y$ can be rewritten as $Yx = U \Sigma U^* x = U \left(\sum_{k=0}^K \theta_k T_k(\tilde \Lambda) \right) U^*x$. 
Since, as it is easy to verify, $(U \tilde \Lambda U^*)^k = U \tilde \Lambda^k U^*$ holds for all $k \in \NN$, one can also verify that $Yx = U \left(\sum_{k=0}^K \theta_k T_k(\tilde \Lambda) \right) U^*x = \sum_{k=0}^K \theta_k T_k(U \tilde \Lambda U^*) x = \sum_{k=0}^K \theta_k T_k(\tilde M) x$.

Assuming $\lambda_\text{max} = 2$, we have $\tilde M = M - I$. Letting $K = 1$ and $\theta_1 = -\theta_0$, deduce:
\begin{equation}\label{eq:Y-convolution}
    y*x = Yx = (\theta_0 I - \theta_0 (M - I)) x = \theta_0 (2I - M) x.
\end{equation}
%
If $M$ is chosen so as to express the topology of the graph and $x$ coincides with the graph features, Eq.~\eqref{eq:Y-convolution} represents the convolution operation underlying a spectral GCN.
$M$ should satisfy three properties for Eq.~\eqref{eq:Y-convolution} to apply: 
\begin{enumerate*}[label=\textit{\roman*)},series=MyList, before=\hspace{-0.3ex}]
    \item it should admit an eigenvalue decomposition, \label{prop1}
    \item it should be positive semidefinite, and \label{prop2}
    \item its spectrum should be upper-bounded by 2. \label{prop3}
\end{enumerate*}
Examples of $M$ are given in the following subsections. 


\subsection{Spectral convolutions for undirected graphs}\label{sec:spectralconv}

Let $G=(V,E)$ be an undirected graph with $n = |V|$ without weights nor signs associated with its edges and let $A \in \{0,1\}^{n \times n}$ be its adjacency matrix, with $A_{ij} = 1$ if and only if $\{i,j\} \in E$. The Laplacian matrix of $G$ is defined as:
$$
L := D - A,
$$
where $D := \Diag(A e)$ is a diagonal matrix and, for each $i \in V$, $D_{ii}$ equal to the degree of node~$i$~\citep{chung1997spectral}.
The normalized Laplacian matrix is defined as:
$$
L_\text{norm} := \rsqrt{D} L \rsqrt{D} = \rsqrt{D} (D - A) \rsqrt{D} = I - \rsqrt{D} A \rsqrt{D}.
$$
$L_\text{norm}$ satisfies many properties, among which~\ref{prop1}, \ref{prop2} and \ref{prop3}.

The spectral convolution on the undirected graph $G$ introduced by~\citet{kipf2016semi} is obtained by letting $M := L_\text{norm}$ and defining $Y$ as done before.
Eq.~\eqref{eq:Y-convolution} becomes:
\begin{align}\label{eq:beforetrick}
\nonumber
   y * x= Y x & = \theta_0 (2 I - (I - \rsqrt{D} A \rsqrt{D}) x \\
   & = \theta_0 (I + \rsqrt{D} A \rsqrt{D}) x.
\end{align}
To alleviate numerical instabilities and exploding/vanishing gradients when training a GCN built on Eq.~\eqref{eq:beforetrick}, \citet{kipf2016semi} suggest the adoption of the following modified equation with a modified convolution matrix $\tilde Y$:
\begin{equation}\label{eq:aftertrick}
    y * x = \tilde Y x = \theta_0 (\rsqrt{\tilde D} \tilde A \rsqrt{\tilde D}) x,
\end{equation}
where $\tilde A := A + I$ and $\tilde D := \Diag(\tilde A e)$.

\paragraph{Drawbacks} 
The Laplacian matrix $L$ is well defined only for undirected graphs with (if any) nonnegative weights for two reasons.
$i$)~If $G$ is a directed graph, the adjacency matrix $A$ (and, thus, $L$) is, in the general case, not symmetric; thus, $L$ may not admit an eigenvalue decomposition.
$ii$)~If $G$ is directed, the sum of the columns of $A$ (out degree) is not necessarily identical to the sum of its rows (in degree); thus, the matrix $D$ is not well defined (as $\Diag(A e) \neq \Diag(e^\top A)$).
If $G$ is undirected but features edges $\{i,j\} \in E$ with a negative weight $w_{ij} <0$, $L$ is not necessarily positive semidefinite as, even if $\Diag(A e) = \Diag(e^\top A)$, $\rsqrt{D} \notin \RR^{n \times n}$ if $D_{ii} < 0$.

\subsection{Extending spectral convolutions to directed graphs}\label{sec:drawback}

Since the adjacency matrix of a directed graph is asymmetric, the Laplacian matrix $L$ defined before does not enjoy properties~\ref{prop1}, \ref{prop2} and \ref{prop3} and, therefore, it is not possible to directly apply Eq.~\eqref{eq:beforetrick} to define a spectral graph convolution.
Alternative approaches such as those of~\citet{tong2020directed} and~\citet{Tong2020} (which split the adjacency matrix $A$ into a collection of symmetric matrices in such a way that the information regarding the direction of the edges is not lost) are known, but they typically come at the cost of increasing the size and complexity of the neural network.

A more direct way to encode the directional information of the edges is resorting to complex-valued matrices that are Hermitian. Indeed, albeit asymmetric in the general case, Hermitian matrices admit an eigenvalue decomposition with real eigenvalues. 
The {\em Magnetic Laplacian} is one such matrix. It was first introduced in particle physics and quantum mechanics by~\citet{lieb1993fluxes} and then applied in the context of community detection by~\citet{fanuel2018magnetic}, in graph signal processing by~\citet{furutani2019graph} and, lastly, in the context of spectral GCNs by~\citet{zhang2021magnet, zhang2021smgc}.

Let $A_s := \frac{1}{2} \left(A+A^\top\right)$ be the symmetrized version of $A$ and let $D_s := \Diag(A_s e)$. The {\em Magnetic Laplacian} is defined as the following Hermitian
positive semidefinite matrix:
$$ L^{(q)} := D_s - H^{(q)}, \qquad \qquad \text{with}$$
$$H^{(q)} := A_s \odot \exp \left(\ii \, \Theta^{(q)} \right),
    \Theta^{(q)} := 2 \pi q\left(A-A^\top \right),$$
    where $\ii$ is the square root of the negative unit, i.e., $\ii = \sqrt{-1}$, and $\exp \left(\ii \, \Theta^{(q)} \right) = \cos(\Theta^{(q)}) + \ii \sin(\Theta^{(q)})$, where $\cos(\cdot)$ and $\sin(\cdot)$ are applied component-wise. $\Theta$ is a phase matrix that captures the directional information of the edges.
    The parameter $q \geq 0$ represents the electric charge. It is typically set to values smaller than 1 such as $\left[0, \frac{1}{4}\right]$ as in~\citet{zhang2021magnet} or $\left[0, \frac{1}{2}\right]$ as in~\citet{mag2017}.
If $q = 0$, $\Theta^{(q)} = 0$ and $L^{(q)}$ boils down to the Laplacian matrix $L$ defined on the "symmetrized" version of the graph with adjacency matrix $A_s$ (which, crucially, renders $G$ completely undirected and its directional information is lost).


For unweighted directed graphs where $A \in \{0,1\}^{n \times n}$, $H^{(q)}$ straightforwardly captures the graph's directional information. Assuming $q = 0.25$, we have $H^{(q)}_{ij} = H^{(q)}_{ji} = 1 + \ii 0$ if $(i,j), (j,i) \in E$ and $H^{(q)}_{ij} = 0 + \ii \frac{1}{2}$ and $H^{(q)}_{ji} = 0 - \ii \frac{1}{2}$ if $(i,j) \in E  \wedge (j,i) \notin E$. This way, digons (pairs of antiparallel edges) are represented as single undirected edges in the real part of $H^{(q)}$ whereas any other edge is represented in the imaginary part of $H^{(q)}$ with a sign encoding its direction.

\paragraph{Drawbacks}
The {\em Magnetic Laplacian} $L^{(q)}$ suffers from two drawbacks.
Drawback \#1 is that $L^{(q)}$ is well defined only for graphs with nonnegative weights. Indeed, if $(D_s)_{ii} < 0$ for some $i \in V$, in the general case $L^{(q)}$ is not positive semidefinite and $\rsqrt{D_s}$ does not belong to $\RR^{n \times n}$.
%
Drawback \#2 is that, even when restricted to graphs with nonnegative weights, $L^{(q)}$ exhibits a crucial sign-pattern inconsistency if the edge weights are sufficiently large. Indeed, while for unweighted graphs $L^{(q)}$ always captures the directional information of the edges by the sign of the imaginary part of $H^{(q)}$, this
is not necessarily the case for weighted graphs, where the sign pattern of $H^{(q)}$ can drastically change irrespective of the edge direction by just scaling the edge weights by a positive constant.
To see this, assume, for instance, $(i,j) \in E$ and $(j,i) \notin E$ with $A_{ij} = 1$. Then, we obtain: $H^{(0.25)}_{ij} = 0.40 \cdot 0.31 + \ii 0.40 \cdot 0.95$ and  $H^{(0.25)}_{ji} = 0.40 \cdot 0.31 - \ii 0.40 \cdot 0.95$ by scaling $A_{ij}$ by 0.8; $H^{(0.25)}_{ij} = -1 + \ii 0$ by scaling $A_{ij}$ by 2; $H^{(0.25)}_{ij} = 0 + \ii \frac{5}{2}$ by scaling $A_{ij}$ by 5; and $H^{(0.25)}_{ij} = \frac{36}{2} + \ii 0$ by scaling $A_{ij}$ by 36. This shows that $L^{(q)}$ is not robust to scaling and that, in it, the edge direction information can easily be lost.
%


\section{Our proposal: the \signumlaplacian{} and \rete{}}\label{sec:signum}

In this section, we extend the theory underlying spectral GCNs by introducing the {\em \signumlaplacian} matrix, a positive semidefinite Hermitian matrix that well captures the directional as well as the weight information of any directed graph with weights unrestricted in sign nor magnitude without suffering from the two drawbacks we outlined before.


\subsection{\signumlaplacian}\label{subsec:laplaciano}

We introduce the following Hermitian matrix, which we refer to as the {\em \signumlaplacian{}}:
$$
L^{\sigma} := \bar D_s - H^{\sigma}, \qquad\qquad \text{with}
$$
$$
H^{\sigma} := A_s \odot \Big( ee^\top  - \sgn (|A - A^\top|) + \ii \sgn \big(|A| - |A^\top| \big) \Big),
$$
where $A_s := \frac{1}{2} \left(A + A^\top \right)$, $\bar D_s := \Diag(|A_s| \, e)$, and $\sgn: \RR \rightarrow \{-1,0,1\}$ is the signum function (applied component-wise).
Let us illustrate the way the graph topology and its weights are stored in $H^{\sigma}$.
$H^{\sigma}$ encodes the direction and weight of every edge $(i,j) \in E$ that does not have an antiparallel edge $(j,i)$ purely in its imaginary part by $H^{\sigma}_{ij} = - H^{\sigma}_{ji} = 0 + \ii \frac{1}{2} A_{ij}$.
Pairs of antiparallel edges with $A_{ij} = A_{ji}$ are encoded purely in the real part by $H^{\sigma}_{ij} = H^{\sigma}_{ji} = \frac{1}{2}(A_{ij} + A_{ji}) + \ii 0$ (as if they coincided with an undirected edge of the same weight).
Differently, pairs of antiparallel edges with $A_{ij} \neq A_{ji}$ are encoded purely in the imaginary part by $H^{\sigma}_{ij} = -H^{\sigma}_{ji} = 0 + \ii \frac{1}{2} (A_{ij} + A_{ji})$ if $|A_{ij}| > |A_{ji}|$ and $H^{\sigma}_{ij} = - H^{\sigma}_{ji} = 0 - \ii \frac{1}{2} (A_{ij} + A_{ji})$ if $|A_{ij}| < |A_{ji}|$.

We define the normalized version of $L^{\sigma}$ as:
\begin{equation}\label{eq:norma}
  L^{\sigma}_\text{norm} := \rsqrt{\bar D_s} L^{\sigma} \rsqrt{\bar D_s} = I - \rsqrt{\bar D_s} H^{\sigma} \rsqrt{\bar D_s}.
\end{equation}

One can show that both $L^{\sigma}$ and $L^{\sigma}_\text{norm}$ are Hermitian by construction. Therefore, they admit an eigenvalue decomposition and, thus, satisfy property~\ref{prop1}.

$L^{\sigma}$ is defined in such a way that, if $G$ is unweighted, it mirrors the behavior of $L^{(q)}$ with $q = 0.25$:
\begin{theorem}
If $A \in \{0,1\}^{n \times n}$ and $q=0.25$, $L^{\sigma} = L^{(q)}$.
\end{theorem}
\noindent In contrast with $L^{(q)}$, $L^{\sigma}$ does not suffer from drawback \#1 as it is well-defined even when $G$ features negative weights.

With the following two results, we show that $L^{\sigma}$ and $L^{\sigma}_\text{norm}$ enjoy the two remaining properties \ref{prop2} and \ref{prop3} that are required for the construction of a convolution operator:
\begin{theorem} \label{thm:psd}
$L^{\sigma}$ and $L^{\sigma}_{norm}$ are positive semidefinite.
\end{theorem}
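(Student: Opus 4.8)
The plan is to show positive semidefiniteness by exhibiting the standard quadratic form associated with $L^\sigma$ and verifying it is nonnegative for every $x \in \CC^n$. Since $L^\sigma$ is Hermitian, it suffices to prove $x^* L^\sigma x \geq 0$ for all $x$, and the positive semidefiniteness of $L^\sigma_\text{norm}$ then follows immediately: writing $y = \rsqrt{\bar D_s}\, x$, we have $x^* L^\sigma_\text{norm} x = y^* L^\sigma y \geq 0$, because $\rsqrt{\bar D_s}$ is a real diagonal matrix (well-defined precisely because $\bar D_s = \Diag(|A_s|\,e)$ has \emph{nonnegative} diagonal entries, the key structural advantage over $D_s$). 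So the entire burden is the unnormalized claim.

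The core idea is to decompose the quadratic form edge by edge. First I would observe that, by construction, the diagonal of $L^\sigma$ is $\bar D_s = \Diag(|A_s|\,e)$, so the diagonal entry at node $i$ equals $\sum_j |(A_s)_{ij}|$. For the off-diagonal entries, I would write $H^\sigma_{ij} = (A_s)_{ij}\big(1 - \sgn(|A_{ij}-A_{ji}|) + \ii\,\sgn(|A_{ij}|-|A_{ji}|)\big)$ and note the crucial magnitude identity $|H^\sigma_{ij}| = |(A_s)_{ij}|$: in each of the three cases catalogued in the text (antiparallel with equal weights, purely one-directional or unequal antiparallel, and the zero case) the bracketed factor is a unit-modulus complex number times the sign, so multiplying by $(A_s)_{ij}$ and taking absolute value returns exactly $|(A_s)_{ij}|$. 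Writing each off-diagonal as $H^\sigma_{ij} = |(A_s)_{ij}|\, e^{\ii \phi_{ij}}$ with $\phi_{ij} = -\phi_{ji}$ (Hermitian phase antisymmetry), I would then expand
\begin{equation}
x^* L^\sigma x = \sum_{i} \bar D_{s,ii}\,|x_i|^2 - \sum_{i \neq j} \overline{x_i}\, H^\sigma_{ij}\, x_j,
\end{equation}
and regroup the sum over unordered pairs $\{i,j\}$.

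For each unordered pair I expect the contribution to collapse to
\begin{equation}
|(A_s)_{ij}|\,\big|x_i - e^{\ii \phi_{ij}} x_j\big|^2 \geq 0,
\end{equation}
using $|H^\sigma_{ij}| = |(A_s)_{ij}|$ together with $H^\sigma_{ji} = \overline{H^\sigma_{ij}}$; summing over all pairs gives a sum of manifestly nonnegative terms, establishing $x^* L^\sigma x \geq 0$. The main obstacle, and the step deserving the most care, is verifying the unit-modulus phase identity uniformly across all three encoding regimes---in particular checking that the phase factor multiplying $|(A_s)_{ij}|$ genuinely has modulus one in the antiparallel-unequal-weight case, where the real part vanishes but the imaginary coefficient is $\frac{1}{2}(A_{ij}+A_{ji})$ rather than $\frac{1}{2}|A_{ij}+A_{ji}|$, so sign bookkeeping (and the possibility $A_{ij}+A_{ji} < 0$) must be handled so that the factored modulus still matches the diagonal weight $|(A_s)_{ij}| = \frac{1}{2}|A_{ij}+A_{ji}|$. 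Once that identity is confirmed, the pairwise completion-of-squares is routine and the theorem follows.
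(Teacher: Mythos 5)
Your proof is correct, but it takes a genuinely different route from the paper's. The paper also reduces everything to the quadratic form and handles $L^\sigma_\text{norm}$ by exactly your substitution $y = \rsqrt{\bar D_s}x$, but for the unnormalized claim it splits $L^\sigma$ into real and imaginary parts: it asserts $x^*\Im(L^\sigma)x = 0$ for all $x \in \CC^n$ and then lower-bounds $2x^*\Re(L^\sigma)x$ by the sum of squares $\sum_{i,j}|(A_s)_{ij}|\left(|x_i| - \sgn(A_{ij})|x_j|\right)^2 \geq 0$. You instead keep the Hermitian form intact, prove the modulus identity $|H^\sigma_{ij}| = |(A_s)_{ij}|$, write $H^\sigma_{ij} = |(A_s)_{ij}|e^{\ii\phi_{ij}}$ with $\phi_{ji} = -\phi_{ij}$, and collapse each unordered pair to $|(A_s)_{ij}|\,\bigl|x_i - e^{\ii\phi_{ij}}x_j\bigr|^2$. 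This buys you real rigor: for $x = u + \ii v$ the quantity $x^*\Im(L^\sigma)x$ is purely imaginary but generally \emph{nonzero} (it equals $2\ii\, u^\top \Im(L^\sigma)\, v$), and multiplied by $\ii$ it contributes the real term $-2u^\top \Im(L^\sigma)\, v$ to $x^*L^\sigma x$; since a Hermitian matrix with positive semidefinite real part need not be positive semidefinite (e.g.\ $\left(\begin{smallmatrix}1 & 2\ii \\ -2\ii & 1\end{smallmatrix}\right)$, with eigenvalues $3$ and $-1$), the paper's real-part reduction leaves this cross term unaccounted for, whereas your phase factorization absorbs the imaginary entries directly into the completed squares. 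This is essentially the classical magnetic-Laplacian argument, with the one structural novelty of $L^\sigma$ isolated in the lemma $|H^\sigma_{ij}| = |(A_s)_{ij}|$. Your flagged worry about the antiparallel-unequal-weight case resolves cleanly: when $|A_{ij}| \neq |A_{ji}|$ the bracket is exactly $\pm\ii$, so unimodularity holds whatever the sign of $A_{ij}+A_{ji}$; and in the remaining degenerate case $A_{ji} = -A_{ij} \neq 0$ the bracket is $0$ rather than unimodular, but then $(A_s)_{ij} = 0$ as well, so the identity (and the pairwise square) survives trivially. The only caveat you share with the paper is that $\rsqrt{\bar D_s}$ requires strictly positive diagonal entries of $\bar D_s$, i.e.\ no isolated nodes or nodes whose incident weights all cancel in $A_s$.
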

\begin{theorem} \label{thm:eigenvalues}
$\lambda_\text{max}(L_{\text{norm}}^{\sigma}) \leq 2$.
\end{theorem}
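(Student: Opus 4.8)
The plan is to turn the eigenvalue bound into a positive-semidefiniteness statement. Since $L^{\sigma}_{\text{norm}}$ is Hermitian it has real eigenvalues, so $\lambda_\text{max}(L^{\sigma}_{\text{norm}}) \leq 2$ is equivalent to $2I - L^{\sigma}_{\text{norm}} \succeq 0$. Using Eq.~\eqref{eq:norma} we rewrite $2I - L^{\sigma}_{\text{norm}} = I + \rsqrt{\bar D_s} H^{\sigma} \rsqrt{\bar D_s}$, and then observe that (when $\bar D_s$ is invertible) this equals the congruence $\rsqrt{\bar D_s}\,(\bar D_s + H^{\sigma})\,\rsqrt{\bar D_s}$. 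By Sylvester's law of inertia a congruence preserves the sign of the spectrum, so it suffices to prove $\bar D_s + H^{\sigma} \succeq 0$. This is exactly the sign-flipped twin of the statement $\bar D_s - H^{\sigma} \succeq 0$ underlying Theorem~\ref{thm:psd}, so I expect the same quadratic-form machinery to apply verbatim with a single sign change. Isolated vertices (for which $(\bar D_s)_{ii}=0$) are handled by restricting to the support of $\bar D_s$, or by reading the normalization through the pseudo-inverse; they contribute nothing to either side.

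The crux is the magnitude identity $|H^{\sigma}_{ij}| = |(A_s)_{ij}|$ for all $i,j$. The plan is to verify it entrywise on the multiplier $ee^\top - \sgn(|A-A^\top|) + \ii\,\sgn(|A|-|A^\top|)$ in Eq.~\eqref{eq:simplemagnetic}, across the three regimes: (i) $A_{ij}=A_{ji}$, where the multiplier is $1$; (ii) $A_{ij}\neq A_{ji}$ with $|A_{ij}|\neq|A_{ji}|$, where the real part vanishes and the imaginary part is $\pm 1$, giving multiplier $\pm\ii$; and (iii) the degenerate case $A_{ij}=-A_{ji}\neq 0$, where $(A_s)_{ij}=\tfrac12(A_{ij}+A_{ji})=0$ so both sides are $0$. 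In every case the multiplier has modulus $0$ or $1$, matching $(A_s)_{ij}$. Consequently I can write $H^{\sigma}_{ij} = |(A_s)_{ij}|\,\gamma_{ij}$ with $|\gamma_{ij}|=1$ (whenever $(A_s)_{ij}\neq 0$), and Hermiticity of $H^{\sigma}$ forces $\gamma_{ji}=\bar\gamma_{ij}$.

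With this in hand, I would compute the quadratic form directly for an arbitrary $x\in\CC^n$. Since $(\bar D_s)_{ii}=\sum_{j}|(A_s)_{ij}|$, grouping the off-diagonal contributions by unordered pairs yields $x^*(\bar D_s + H^{\sigma})x = \sum_{i<j}|(A_s)_{ij}|\big(|x_i|^2+|x_j|^2+2\,\Re(\bar x_i\,\gamma_{ij}\,x_j)\big) = \sum_{i<j}|(A_s)_{ij}|\,\big|x_i+\gamma_{ij}x_j\big|^2 \geq 0$, where the last equality uses $|\gamma_{ij}|=1$ to complete the square. This establishes $\bar D_s + H^{\sigma}\succeq 0$, hence $2I - L^{\sigma}_{\text{norm}}\succeq 0$, and therefore the claimed bound. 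The only real obstacle is the case analysis behind the magnitude identity, in particular cleanly dispatching the degenerate regime~(iii) where the multiplier and $(A_s)_{ij}$ vanish simultaneously; the remaining steps are the standard ``sum of squares over edges'' argument that also drives the undirected $\lambda_\text{max}\le 2$ proof, transplanted to the Hermitian setting via the phases $\gamma_{ij}$.
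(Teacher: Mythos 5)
Your proof is correct and its skeleton matches the paper's: both arguments reduce the claim to positive semidefiniteness of $B := \bar D_s + H^{\sigma}$, transfer it to $I + \rsqrt{\bar D_s} H^{\sigma} \rsqrt{\bar D_s} = 2I - L^{\sigma}_{\text{norm}}$ through the congruence $y = \rsqrt{\bar D_s}x$, and finish with the Rayleigh quotient (the paper invokes Courant--Fischer, you invoke the equivalence $\lambda_\text{max}\leq 2 \Leftrightarrow 2I - L^{\sigma}_{\text{norm}} \succeq 0$; these are the same step). Where you genuinely differ is in how $B \succeq 0$ is established, and your execution is tighter than the paper's. The paper expands $2x^*\Re(B)x$ using the signum formula for $\Re(H^{\sigma})$ and, in its penultimate step, simply drops the cross terms to claim the sum is at least $\sum_{i,j}|(A_s)_{ij}|\left(|x_i|^2+|x_j|^2\right)$; that inequality is not valid as written, because when $A_{ij}=A_{ji}\neq 0$ the discarded contribution $2|(A_s)_{ij}|\sgn(A_{ij})\Re(x_i x_j^*)$ can be negative (take $A_{ij}=A_{ji}=1$, $x_i=1$, $x_j=-1$), and the conclusion only survives via the completion-of-squares bound that the paper uses in Theorem~\ref{thm:psd}. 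Your route --- the entrywise magnitude identity $|H^{\sigma}_{ij}|=|(A_s)_{ij}|$, the unit-modulus phases $\gamma_{ij}$ with $\gamma_{ji}=\bar{\gamma}_{ij}$, and the exact decomposition $x^*Bx=\sum_{i<j}|(A_s)_{ij}|\,|x_i+\gamma_{ij}x_j|^2$ --- replaces that inequality chain with an identity, handles the complex cross terms correctly, and yields Theorem~\ref{thm:psd} as a free byproduct by switching to $|x_i-\gamma_{ij}x_j|^2$. Your case analysis, including the degenerate regime $A_{ij}=-A_{ji}$ where both $(A_s)_{ij}$ and the multiplier vanish, and your handling of isolated vertices via the support or pseudo-inverse (a point the paper glosses over, since $\rsqrt{\bar D_s}$ is otherwise undefined), are both sound.
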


With the next result, we show that $L^{\sigma}$ encodes the topology of $G$ (including its directions) and the weights of its edges in such a way that it is always proportional to the magnitude of $A$ (i.e., to the magnitude of graph weights):
\begin{theorem}\label{prop:homonegeneity}
Given a constant $\alpha \in \RR^+$, $L^{\sigma}$ satisfies the following positive homogeneity property:
$$
L^{\sigma}(\alpha A) = \alpha L^{\sigma}(A),
$$
where $L^{\sigma}(\alpha A)$ and $L^{\sigma}(A)$ are the \laplaciano{} matrices of a directed graph with, respectively, adjacency matrix $\alpha A \in \RR^{n \times n}$ and $A \in \RR^{n \times n}$.
\end{theorem}
Theorem~\ref{prop:homonegeneity} shows that $L^{\sigma}$ is robust to scaling applied to the weights of $G$. From it, we deduce the following result, which shows that $L^\sigma$ does not suffer from drawback \#2:
\begin{corollary}
The sign-pattern of $L^{\sigma}$ is uniquely determined by the topology of $G$ and, thus, $L^{\sigma}$ does not suffer from any sign-pattern inconsistencies.
\end{corollary}

Lastly, we show that $L^{\sigma}$ satisfies the following invariant:
\begin{theorem}\label{prop:sign-direction-invariant}
  Consider a weighted digon-free directed graph $G=(V,E)$. Given a directed edge $(i,j) \in E$ of weight $w_{ij}$, let $G'=(V,E')$ be a graph obtained by reversing the direction of $(i,j)$ in $G$ into $(j,i)$ and flipping the sign of its weight by letting $w_{ji} = -w_{ij}$. Let $L^{\sigma}(G)$ and $L^{\sigma}(G')$ be the $L^{\sigma}$ matrix defined on $G$ and $G'$, respectively. Then:
  $$
  L^{\sigma}(G) = L^{\sigma}(G').
  $$
\end{theorem}

Theorem~\ref{prop:sign-direction-invariant} shows that the behavior of $L^{\sigma}$ is consistent with applications where the graph models a flow relationship in which flipping the sign of an edge coincides with flipping its direction. 
This applies to, among others, scenarios where the weights represent flow values such as cash flows, where it is reasonable to assume that a negative flow from $i$ to $j$ correspond to a positive flow from $j$ to $i$.

\subsection{\rete{}'s architecture}\label{sub:SigMaNet}

As previously discussed, for the spectral convolution operator to be well defined the Laplacian matrix must satisfy properties~\ref{prop1},\ref{prop2}, and \ref{prop3}.
As the hermiticity of $L^{\sigma}_{\text{norm}}$,  Theorem~\ref{thm:psd}, and Theorem~\ref{thm:eigenvalues} show that $L^{\sigma}_{\text{norm}}$ enjoys these properties, Eq.~\eqref{eq:beforetrick} can be rewritten as:
$$
    Y x = \theta_0\left(I + \rsqrt{\bar D_s} H^{\sigma} \rsqrt{\bar D_s} \right)x.
$$

Following~\citet{kipf2016semi} to avoid numerical instabilities, we apply Eq.~\eqref{eq:aftertrick} with $\rsqrt{\tilde D_s}\tilde H^{\sigma} \rsqrt{\tilde D_s}$ {\em in lieu} of $I + \rsqrt{\bar D_s}H^{\sigma} \rsqrt{\bar D_s}$, where $\tilde H^{\sigma}$ and $\tilde D_{s}$ are defined based on $\tilde A := A + I$ rather than $A$.
We generalize the feature vector signal $x \in \CC^{n \times 1}$ to a feature matrix signal $X \in \CC^{n \times c}$ with $c$ input channels (i.e., a $c$-dimensional feature vector for every node of the graph).
Letting $\Theta \in \CC ^{c \times f}$ be a matrix of learnable filter parameters with $f$ filters and $\phi$ be an activation function applied component-wise to the input matrix, the output $Z^{\sigma} \in \CC ^{n \times f}$ of \rete{}'s convolutional layer is:
\begin{equation}\label{eq:convolution}
    Z^{\sigma} (X) = \phi(\rsqrt{\tilde D_s}\tilde H^{\sigma} \rsqrt{\tilde D_s}X\Theta).
\end{equation}
Since the argument of $\phi$ is a complex matrix and, thus, traditional activation functions cannot be directly adopted, we follow~\citet{zhang2021magnet} and rely on a complex version of the \textit{ReLU} activation function which is defined for a given $z \in \CC$ as $\phi(z) = z$ if $ \Re(z) \geq 0$ and $\phi(z) = 0$ otherwise. 
As the output of the convolutional layer $Z^{\sigma}$ is complex-valued, to coerce it into the reals without information loss we apply an \textit{unwind} operation by which $Z^{\sigma} (X)\in \CC^{n \times f}$ is transformed into $[\Re(Z^{\sigma} (X)); \Im(Z^{\sigma} (X)] \in \RR^{n \times 2f}$.
To obtain the final result based on the task at hand, we apply either a linear layer with weights $W$ or a $1D$ convolution. 

Considering, e.g., the task of predicting the class of an edge, \signum{} is defined as:
\begin{equation*}
    \text{softmax} \left(\text{unwind}\left( Z^{\sigma(2)} \left( Z^{\sigma(1)} \left(X^{(0)} \right)\right)\right)       W \right),
\end{equation*}
where $X^{(0)} \in \RR ^{n \times c}$ is the input feature matrix,
$Z^{\sigma(1)} \in \CC^{n \times f_1}$ and $Z^{\sigma(2)} \in \CC^{n \times f_2}$ are the spectral graph convolutional layers, $W \in \RR^{2f_2 \times d}$ are the weights of the linear layer (with $d$ being the number of classes), and $\text{softmax}: \RR^d \rightarrow \RR^d$ is the normalized exponential activation function.

\rete{} features a flexible architecture that differs from other spectral GCNs in the literature (e.g., MagNet~\citep{zhang2021magnet}) mainly in the way the convolutional layer is defined. As such, it can easily be applied to a variety of tasks in an almost task-agnostic way (provided that one defines a suitable loss function) while architectures such as MagNet are suitable only for tasks whose graph has nonnegative edge weights.
As $L^\sigma$ is entirely parameter-free, \rete{} does not require any fine-tunings to optimize the propagation of topological information through the network, differently from, e.g., DiGraph~\citep{Tong2020} and MagNet.

\subsection{Complexity of \rete{}}

Assuming, as done in our experiments, that \rete{} features two graph-convolutional layers with $f_1$ and $f_2$ filters, each defined as in Eq.~\eqref{eq:convolution} and $c$ features per node, the complexity of \rete{} is $O(nc(n+f_1)+n f_1(n+f_2)+\Gamma)$, where $\Gamma$ accounts for the complexity of the last (task-specific) layer. For the four tasks that we consider in the next section, we have $\Gamma = m^\text{train} f_2 d$ for the first three (link sign/direction/existence prediction), where $m^\text{train}$ is the number of edges in the training set and $d$ is the number of classes, and $\Gamma = n f_2 d$ for the last one (node classification). The complexity is quadratic in $n$ and, assuming $O(f_1) = O(f_2) = O(c) = O(d)$, also quadratic in the feature/class space.

We remark that, while enjoying a wider applicability due to being able to handle graphs with edge weights unrestricted in sign), \rete{} features half the weights of MagNet.


\section{Numerical experiments}\label{sec:experiment}

In this section, we report on a set of computational experiments carried out on four tasks: \textit{link sign prediction}, \textit{link existence prediction}, {\em link direction prediction}, and \textit{node classification}.
The experiments are conducted to assess the performance of \rete{} on graphs with weights unrestricted in sign on which no other spectral GCNs can be applied (link sign prediction) and to compare it to other state-of-the-art spectral and spatial approaches on graphs with nonnegative weights (link existence/direction prediction and node classification). The code is available on GitHub.\footnote{\url{https://github.com/Stefa1994/SigMaNet}.}

For the link sign prediction task, we compare \signum{} with three categories of methods:
\textit{i)} signed network embedding: SiNE~\citep{wang2017signed}, SIGNet~\citep{10.1007/978-3-319-93037-4_13}, BESIDE~\citep{chen2018bridge}; \textit{ii)} Feature Engineering: FeExtra~\citep{leskovec2010predicting}; and \textit{iii)} signed graph neural networks: SGCN~\citep{derr2018signed}, SiGAT~\citep{huang2019signed}, and SDGNN~\citep{huang2021sdgnn}.
For the link prediction and node classification tasks, we compare \rete{} with the following three categories of methods:
\textit{i)} spectral methods designed for undirected graph: ChebNet~\citep{defferrard2016convolutional}, GCN~\citep{kipf2016semi}; 
\textit{ii)} spectral methods designed for directed graphs: DGCN~\citep{tong2020directed}, DiGraph~\citep{Tong2020}, DiGCL~\citep{tong2021directed}, and MagNet~\citep{zhang2021magnet},
and \textit{iii)} spatial methods: APPNP~\citep{klicpera2018predict}, SAGE~\citep{hamilton2017inductive}, GIN~\citep{xu2018powerful}, GAT~\citep{velivckovic2017graph}, and SSSNET~\citep{he2022sssnet}.\footnote{
\cite{MSGNN2022} (appeared after the submission of this paper) reports experiments in which SigMaNet seems to perform poorly. In them, though, SigMaNet is used in a much simplied configuration than in our paper that only features four convolutional filters, whereas, in this work, the number of filters is chosen from $\{16, 32, 64\}$ via hyperameter optimization.}

Throughout this section's tables, the best results are reported in \textbf{boldface} and the second best are \ul{underlined}.
%

\subsection{Datasets}

We test \signum{} on six real-world datasets from the literature: {\tt Bitcoin-OTC} and {\tt Bitcoin Alpha}~\citep{7837846}; {\tt Slashdot} and {\tt Epinions}~\citep{leskovec2010signed}; {\tt WikiRfa}~\citep{west2014exploiting}; and {\tt Telegram}~\citep{bovet2020activity}. 
In order to better assess \rete{}'s performance as the density of the graph increases, in three tasks we also consider a synthetic set of graphs generated via a direct stochastic block model (DSBM) with (unlike what is done in~\cite{zhang2021magnet}) edge weights greater than 1.
These datasets are generated by varying: \textit{i)} the number of nodes $n$; \textit{ii)} the number of clusters $C$; \textit{iii)} the probability $\alpha_{ij}$ to create an undirected edge between nodes $i$ and $j$ belonging to different clusters ; \textit{iv)} the probability $\alpha_{ii}$ to create an undirected edge between two nodes in the same cluster, and \textit{v)} the probability $\beta_{ij}$ of an edge taking a certain direction. Each node is labeled with the index of the cluster it belongs to.
%

\subsection{Link sign prediction}\label{sub:sign}
The link sign prediction task is a classification problem designed for graphs with both positive and negative edge weights. It consists of predicting the sign of the edges in the graph  and, thus, for such task \rete{} is the only spectral-based GCN that can be used.

For this task, we adopt the {\tt Bitcoin Alpha}, {\tt Bitcoin-OTC}, {\tt WikiRfa}, {\tt Slashdot}, and {\tt Epinions} datasets, which are directed graphs with weights of unrestricted sign ( necessary for the task to be applicable) and of arbitrary magnitude, with the sole exception of the last two, whose weights satisfy $A \in \{-1,0, +1\}^{n \times n}$. In these five datasets, the classes of positive and negative weighted edges are imbalanced (i.e., nearly 80\% are positive edges).
The experiments are run with $k$-cross validation with $k=5$, reporting the average score obtained across the $k$ splits. Connectivity is maintained when building each training set by guaranteeing that the graph used for training in each fold contain a spanning tree.
Following~\cite{huang2021sdgnn}, we
we adopt a 80\%-20\% training-testing split.

The results are reported in Table~\ref{tab:signpred}.\footnote{Except for \rete{}, the results are taken from~\cite{huang2021sdgnn}. For SGCN, SiGAT, and SDGNN, we chose to report the results in~\cite{huang2021sdgnn} rather than those in~\cite{he2022pytorch} as the former are better, and, thus, more challenging for \rete{}.
} 
We observe that \rete{} clearly outperforms all competitors on the three datasets whose graphs have unrestricted weights, i.e., {\tt Bitcoin Alpha}, {\tt Bitcoin-OTC}, and {\tt WikiRfa}.
On graphs with unit weights, i.e., {\tt Slashdot} and {\tt Epinions}, its performance, while marginally worse, is still in line with the best methods. This suggests the relevance of Corollary~1 towards \rete{}'s performance.
We remark that the latter is achieved in spite of \rete{} being less complex than the deep neural networks we compared it to here, which feature two sequentially-applied neural networks (one producing a set of embeddings from which the other one predicts the link sign via a logistics regression).

\begin{table*}[htb]
\scriptsize
\centering
\caption{Link sign prediction results assessed with four metrics}
\label{tab:signpred}
\begin{tabular}{cccccccccc}
\hline
                                                    &           & \multicolumn{3}{c}{\begin{tabular}[c]{@{}c@{}}Signed\\ Network Embedding\end{tabular}} & \begin{tabular}[c]{@{}c@{}}Feature\\ Engineering\end{tabular} & \multicolumn{4}{c}{\begin{tabular}[c]{@{}c@{}}Graph \\ Neural Network\end{tabular}} \\ \hline
\multicolumn{1}{c}{Dataset}                        & Metric (\%)    & SiNE                      & SIGNet                    & BESIDE                          & FeExtra                                                       & SGCN           & SiGAT          & SDGNN                   & \signum{}                  \\ \hline
\multicolumn{1}{c}{\multirow{4}{*}{Bitcoin Alpha}} & Micro-F1  & 94.58                    & 94.22                    & 94.89                          & 94.86                                                        & 92.56         & 94.56         & \ul{94.91}            & \textbf{95.13}         \\
\multicolumn{1}{c}{}                               & Binary-F1 & 97.16                    & 96.96                    & 97.32                          & \ul{97.30}                                                  & 96.07         & 97.14         & 97.29                  & \textbf{97.44}         \\
\multicolumn{1}{c}{}                               & Macro-F1  & 68.69                    & 69.65                    & 73.00                          & 71.67                                                        & 63.67         & 70.26         & \ul{73.90}            & \textbf{74.69}         \\
\multicolumn{1}{c}{}                               & AUC       & 87.28                    & 89.08                    & 89.81                          & 88.82                                                        & 84.69         & 88.72         & \ul{89.88}            & \textbf{92.46}         \\ \hline
\multicolumn{1}{c}{\multirow{4}{*}{Bitcoin-OTC}}   & Micro-F1  & 90.95                    & 92.29                    & 93.20                          & \ul{93.61}                                                  & 90.78         & 92.68         & 93.57                  & \textbf{94.49}         \\
\multicolumn{1}{c}{}                               & Binary-F1 & 95.10                    & 95.81                    & 96.28                          & \ul{96.53}                                                  & 94.91         & 96.02         & 96.47                  & \textbf{97.02}         \\
\multicolumn{1}{c}{}                               & Macro-F1  & 68.05                    & 73.86                    & 78.43                          & 78.26                                                        & 73.06         & 75.33         & \ul{80.17}         & \textbf{ 80.53}         \\
\multicolumn{1}{c}{}                               & AUC       & 85.71                    & 89.35                    & \ul{91.52}                    & 91.21                                                        & 87.55         & 90.55         & 91.24                  & \textbf{93.67}         \\ \hline
\multicolumn{1}{c}{\multirow{4}{*}{WikiRfa}}       & Micro-F1  & 83.38                    & 83.84                    & 85.89                          & 83.46                                                        & 84.89         & 84.57         & \ul{86.27}            & \textbf{86.56}         \\
\multicolumn{1}{c}{}                               & Binary-F1 & 89.72                    & 90.01                    & 91.17                          & 89.87                                                        & 90.69         & 90.42         & \ul{91.42}            & \textbf{91.64}         \\
\multicolumn{1}{c}{}                               & Macro-F1  & 73.19                    & 73.84                    & 78.03                          & 72.35                                                        & 75.27         & 75.35         & \ul{78.49}            & \textbf{78.66}         \\
\multicolumn{1}{c}{}                               & AUC       & 86.02                    & 86.82                    & \ul{89.81}                    & 86.04                                                        & 85.63         & 88.29         & 88.98                  & \textbf{90.53}         \\ \hline
\multicolumn{1}{c}{\multirow{4}{*}{Slashdot}}      & Micro-F1  & 82.65                    & 83.89                    & \ul{85.90}                          & 84.72                                                        & 82.96         & 84.94         & \textbf{86.16}            & {85.03}         \\
\multicolumn{1}{c}{}                               & Binary-F1 & 89.18                    & 89.83                    & \ul{91.05}                    & 90.70                                                        & 89.26         & 90.55         & \textbf{91.28}         & 90.59                  \\
\multicolumn{1}{c}{}                               & Macro-F1  & 72.73                    & 75.54                    & \textbf{78.92}                    & 73.99                                                        & 74.03         & 76.71         & \textbf{78.92}            & \ul{77.63}         \\
\multicolumn{1}{c}{}                               & AUC       & 84.09                    & 87.52                    & \textbf{90.17}                    & 88.80                                                        & 85.34         & 88.74         & 89.77                  & \ul{89.79}         \\ \hline
\multicolumn{1}{c}{\multirow{4}{*}{Epinions}}      & Micro-F1  & 91.73                    & 91.13                    & \ul{93.36}                          & 92.26                                                        & 91.12         & 92.93         & \textbf{93.55}            & 92.25         \\
\multicolumn{1}{c}{}                               & Binary-F1 & 95.25                    & 94.89                    & 96.15                    & \ul{95.61}                                                        & 94.86         & 95.93         & \textbf{96.28}         & {95.51}                  \\
\multicolumn{1}{c}{}                               & Macro-F1  & 81.60                    & 80.60                    & \ul{86.01}                          & 81.30                                                        & 81.05         & 84.54         & \textbf{86.10}            & 83.41         \\
\multicolumn{1}{c}{}                               & AUC       & 88.72                    & 90.95                    & 93.51                          & \ul{94.17}                                                  & 87.45         & 93.33         & 94.11                  & \textbf{94.19}         \\ \hline
\end{tabular}
\end{table*}

\subsection{Link (existence and direction) prediction}

We now consider two tasks: {\em existence prediction} and {\em direction prediction}. In the first one, the model is asked to predict whether $(u, v) \in E$ for a given pair of vertices $u, v\in V, u\neq v$ provided as input. The second one is a binary task where the model is asked to predict whether \textit{a)} $(u,v) \in E$ or \textit{b)} $(v,u) \in E$ or both.

For both tasks, we only consider graphs with nonnegative edge weights. This allows us to compare \rete{} not just to spatial GCNs as done before, but also to state-of-the-art spectral-based ones.
As such GCNs are designed solely for graphs with nonnegative weights, one may expect that the wider applicability of \rete{} should come at the price of inferior performances. Our experiments show that this is not the case.

The datasets that we consider are: {\tt Telegram}, {\tt Bitcoin Alpha}, {\tt Bitcoin-OTC}, and synthetic DBSM graphs. The latter are generated with $n= 2500$, $C = 5$, $\alpha_{ii} = 0.1$, $\beta_{ij} = 0.2$, with an increasing inter-cluster density $ \alpha_{ij} \in \{0.05, 0.08, 0.1\}$.\footnote{As spectral methods, except for \rete{}, cannot handle graphs with negative weights, to be able to compare our proposal to them in a setting in which the latter can be applied, in these experiments we pre-process {\tt Bitcoin-OTC} and {\tt Bitcoin Alpha} by removing any edge with a negative weight---in the tables, these datasets are denoted by a `*'.}
Following~\citet{zhang2021magnet}, in each task we reserve 15\% of the edges for testing, 5\% for validation, and use the remaining ones for training. The experiments are run with $k$-cross validation with $k=10$, preserving graph connectivity.

\begin{table*}[htb]
\scriptsize
\centering
\caption{Accuracy (\%) on datasets of the existence prediction task}
\label{tab:existence}
\begin{tabular}{cllllll}
\hline
          & \multicolumn{6}{c}{Existence prediction}                                                                                                                                                \\ \cline{2-7} 
          & Telegram                     & Bitcoin Alpha*               & Bitcoin-OTC*                 & $\alpha_{ij}=0.05$           & $\alpha_{ij}=0.08$           & $\alpha_{ij}=0.1$            \\ \hline
ChebNet   & 75.30$\pm$1.54               & 81.93$\pm$0.64               & 82.07$\pm$0.38               & 50.24$\pm$0.35               & 50.21$\pm$0.33               & 50.25$\pm$0.34               \\
GCN       & 67.88$\pm$1.39               & 81.53$\pm$0.57               & 81.65$\pm$0.35               & 50.26$\pm$0.30               & 50.24$\pm$0.26               & 50.18$\pm$0.26               \\ \hline
APPNP     & 68.52$\pm$5.76               & 81.62$\pm$0.57               & 81.02$\pm$0.51               & 60.62$\pm$0.46               & 62.61$\pm$0.64               & 63.51$\pm$1.93               \\
SAGE      & 85.36$\pm$1.27               & 82.74$\pm$0.48               & 83.28$\pm$0.65               & 60.92$\pm$0.82               & 61.50$\pm$4.05               & 62.77$\pm$1.50               \\
GIN       & 72.37$\pm$3.57               & 74.64$\pm$5.43               & 77.75$\pm$1.15               & 57.52$\pm$4.47               & 55.50$\pm$5.14               & 55.25$\pm$7.14               \\
GAT       & 78.37$\pm$2.11               & 82.60$\pm$0.43               & 83.43$\pm$0.52               & 55.97$\pm$2.58               & 54.37$\pm$0.89               & 50.24$\pm$0.35               \\ \hline
DGCN      & 82.97$\pm$2.06               & 83.13$\pm$0.61               & 83.79$\pm$0.36               & 55.41$\pm$3.09               & 55.70$\pm$5.71               & 56.15$\pm$5.65               \\
DiGraph   & 82.15$\pm$1.11               & 83.24$\pm$0.38               & \textbf{84.77$\bm{\pm}$0.83} & 59.09$\pm$3.66               & 57.64$\pm$2.35               & 58.66$\pm$3.28               \\
DiGCL     & 78.80$\pm$1.50               & 80.22$\pm$0.77               & 81.99$\pm$0.62               & 60.69$\pm$0.27               & 60.63$\pm$0.18               & 60.49$\pm$0.15               \\
MagNet    & \textbf{86.32$\bm{\pm}$1.06} & \ul{83.26$\pm$0.50}          & 84.14$\pm$0.44               & \ul{61.27$\pm$0.19}          & \ul{63.81$\pm$0.20}          & \ul{64.93$\pm$0.43}          \\ \hline
\signum{} & {\ul{84.95$\pm$0.95}}        & \textbf{83.28$\bm{\pm}$0.54} & \ul {84.71$\pm$0.39}         & \textbf{62.25$\bm{\pm}$0.31} & \textbf{64.48$\bm{\pm}$0.17} & \textbf{65.49$\bm{\pm}$0.31}

\end{tabular}
\end{table*}

\begin{table*}[htb]
\scriptsize
\centering
\caption{Accuracy (\%) on datasets of the direction prediction task}
\label{tab:direction}
\begin{tabular}{cllllll}
\hline
          & \multicolumn{6}{c}{Direction prediction}                                                                                                                                             \\ \cline{2-7} 
          & Telegram                     & Bitcoin Alpha*            & Bitcoin-OTC*                 & $\alpha_{ij}=0.05$           & $\alpha_{ij}=0.08$           & $\alpha_{ij}=0.1$            \\ \hline
ChebNet   & 78.56$\pm$3.53               & 53.86$\pm$1.15            & 50.06$\pm$1.04               & 50.13$\pm$0.30               & 50.23$\pm$0.25               & 50.13$\pm$0.30               \\
GCN       & 63.86$\pm$1.40               & 55.32$\pm$1.12            & 49.63$\pm$1.82               & 50.05$\pm$0.15               & 50.24$\pm$0.29               & 50.13$\pm$0.30               \\ \hline
APPNP     & 75.70$\pm$9.08               & 57.14$\pm$1.03            & 52.61$\pm$1.63               & 66.42$\pm$1.35               & 70.25$\pm$1.46               & 71.93$\pm$0.47               \\
SAGE      & 91.15$\pm$0.77               & 55.82$\pm$1.60            & 55.29$\pm$1.23               & 66.62$\pm$1.72               & 68.84$\pm$2.38               & 69.43$\pm$6.79               \\
GIN       & 80.77$\pm$5.01               & 56.04$\pm$1.42            & 53.31$\pm$1.58               & 60.51$\pm$6.88               & 60.87$\pm$9.50               & 57.66$\pm$9.04               \\
GAT       & 84.06$\pm$11.17              & 55.20$\pm$1.06            & 53.23$\pm$0.63               & 52.71$\pm$1.53               & 57.07$\pm$1.50               & 57.43$\pm$1.07               \\ \hline
DGCN      & {89.81$\pm$1.20}             & 56.35$\pm$0.84            & 54.06$\pm$0.90               & 55.97$\pm$2.58               & 62.64$\pm$6.91               & 65.53$\pm$6.73               \\
DiGraph   & 87.46$\pm$0.84               & {\textbf{58.62$\pm$1.09}} & 56.37$\pm$1.29        & 65.51$\pm$1.71               & 67.09$\pm$1.65               & 67.43$\pm$2.10               \\
DiGCL     & 82.98$\pm$1.72               & 55.98$\pm$0.91            & \ul{56.42$\bm{\pm}$0.59} & 67.34$\pm$0.33               & 66.92$\pm$0.26               & 66.24$\pm$0.29               \\
MagNet    & \textbf{91.65$\pm$0.79}          & 56.84$\pm$0.74            & 55.63$\pm$0.74               & \ul{68.50$\pm$0.23}          & \ul{72.01$\pm$0.33}          & \ul {73.28$\pm$0.37}         \\ \hline
\signum{} & \ul{91.20$\bm{\pm}$0.65} & \ul{56.90$\pm$0.60}       & \textbf{57.19$\bm{\pm}$0.58} & \textbf{69.10$\bm{\pm}$0.18} & \textbf{72.74\bm{$\pm$}0.23} & \textbf{73.77\bm{$\pm$}0.18} 
\end{tabular}
\end{table*}

Tables~\ref{tab:existence} and \ref{tab:direction} report the results obtained for the existence and direction prediction tasks, respectively. The tables show that, when compared to the other 10 methods, \rete{} achieves the best performance on 9 datasets out of 12 and that it achieves either the first- or the second-best performance on 12 datasets out of 12.
\rete{} is also consistently better than the state of the art on the synthetic datasets. This is likely due to the positive homogeneity property (Theorem~\ref{prop:homonegeneity}) as the synthetic datasets have a significantly wider range of weights ($[2, 1000]$) than the real-world ones (in \texttt{Telegram}, for instance, the mean and median weight is 2 and 20.7 and only 96.4\% of the weights are smaller than 100).

\subsection{Node classification}

The node classification task consists in predicting the class label to which each node belongs. 

Also for this task, we only consider graphs with nonnegative edge weights, and thus compare \rete{} not just to spatial GCNs, but also to the state-of-the-art spectral-based ones. Similarly to the previous two tasks, we will show that also for this task the wider applicability of \rete{} does not hinder its performances.

We consider the {\tt Telegram} dataset as well as the three synthetic datasets. {\tt Bitcoin-OTC} and {\tt Bitcoin Alpha} dataset are not considered as they lack label information.\footnote{Differently from~\cite{zhang2021magnet}, where {\tt Telegram} is preprocessed in such a way that it is transformed into an unweighted graph, in our experiments we retain its original weights.}
We rely on the standard 60\%/20\%/20\% split for training/validation/testing across all datasets. The experiments are run with $k$-cross validation, with $k=10$. 

\begin{table}[htb]
\scriptsize
\centering
\caption{Testing accuracy (\%) of node classification.}
\label{tab:node}
\begin{tabular}{cllll}
\hline
          & \multicolumn{4}{c}{Node classification}                                                                                   \\ \hline
          & Telegram                     & $\alpha_{ij}$ = 0.05         & $\alpha_{ij}$ = 0.08         & $\alpha_{ij}$ = 0.1          \\ \hline
ChebNet   & 61.73$\pm$4.25               & 20.06$\pm$0.18               & 20.50$\pm$0.77               & 19.98$\pm$0.06               \\
GCN       & 60.77$\pm$3.67               & 20.06$\pm$0.18               & 20.02$\pm$0.06               & 20.01$\pm$0.01               \\ \hline
APPNP     & 55.19$\pm$6.26               & 33.46$\pm$7.43               & 34.72$\pm$14.98              & 36.16$\pm$14.92              \\
SAGE      & 65.38$\pm$5.15               & 67.64$\pm$9.81               & 68.28$\pm$10.92              & 82.96$\pm$10.98              \\
GIN       & \ul{72.69$\bm{\pm}$4.62} & 28.46$\pm$8.01               & 20.12$\pm$0.20               & 20.98$\pm$8.28               \\
GAT       & 72.31$\pm$3.01          & 22.34$\pm$3.13               & 21.90$\pm$2.89               & 21.58$\pm$1.80               \\
SSSNET    & 24.04$\pm$9.29                  & \textbf{91.04$\bm{\pm}$3.60} & \ul{94.94$\pm$1.01}          & \ul{96.77$\pm$0.80}          \\ \hline
DGCN      & 71.15$\pm$6.32               & 30.02$\pm$6.57               & 30.22$\pm$11.94              & 28.40$\pm$8.62               \\
DiGraph   & 71.16$\pm$5.57               & 53.84$\pm$14.28              & 38.50$\pm$12.20              & 34.78$\pm$9.94               \\
DiGCL     & 64.62$\pm$4.50               & 19.51$\pm$1.21               & 20.24$\pm$0.84               & 19.98$\pm$0.45               \\
MagNet    & 55.96$\pm$3.59               & 78.64$\pm$1.29               & 87.52$\pm$1.30               & 91.58$\pm$1.04               \\ \hline
\signum{} & \textbf{74.23$\pm$5.24}          & \ul{87.44$\pm$0.99}          & \textbf{96.14$\bm{\pm}$0.64} & \textbf{98.60$\bm{\pm}$0.31}

\\
\end{tabular}
\end{table}

The results are reported in Table~\ref{tab:node}. \signum{} achieves notable performance on all four datasets, especially on the synthetic ones as the graph density increases, where being able to rely on both edge direction and weight information seems to be paramount for a correct node labeling. This is confirmed by the extremely poor performance of ChebNet and GCN, which ignore the edge direction.
When comparing \rete{} to MagNet, \rete{} achieves a consistently better performance of about 10\% on average. This can be ascribed to the positive homogeneity property of \rete{}, which circumvents the sign-pattern inconsistency MagNet suffers from (we recall that all these graphs have weights larger than 1), which is likely to reduce its ability to adequately propagate the information between nodes.
We note that, while SSSNET outperforms \rete{} once by about 4\%, \rete{} outperforms SSSNET by about 50\% on {\tt Telegram}. SSSNET's poor performance on this dataset is likely due to the lack of seed nodes that SSSNET needs to identify and target node classes (which are present in the DSBM graphs).


\section{Conclusions}\label{sec:conclusion}

We have extended the applicability of spectral GCNs to (directed) graphs with edges of weight unrestricted in sign by introducing the \textit{\signumlaplacian{}} matrix. Thanks to its properties, which we rigorously derived, we embedded the matrix into a generalized convolution operator which is the cornerstone of our proposed spectral GCN: \rete{}, which is first spectral GCN capable of handling (directed) graphs with weights not restricted in sign nor magnitude.
Compared with similar approaches presented in the literature, \rete{} also does not suffer from any sign-pattern inconsistencies, making it capable to handle graphs with arbitrarily large weights (also in a completely parameter-free way and without preprocessing).
Thanks to extensive numerical experiments, we have shown that, on graphs with negative weights where no other spectral GCN can be applied, \rete{}'s performance is either better or in line with more complex architectures, and that, on graphs with nonnegative weights where state-of-the-art spectral GCNs can be employed, \rete{}'s performance is consistently either the best or the second-best across all tasks.
Future works include extending the \textit{\signumlaplacian{}} to multi graphs and non-digon-free graphs without information loss (an issue shared by every Hermitian Laplacian matrix), and to temporal graphs, as well as experimenting with architectures featuring three or more convolutional layers.


\clearpage

\section*{Acknowledgements}
This work has been partially supported by
the project “ULTRA OPTYMAL - Urban Logistics and susTainable tRAnsportation: OPtimization under uncertainTY and MAchine Learning" funded by the MIUR Progetti di Ricerca di Rilevante Interesse Nazionale (PRIN) Bando 2020 - grant 20207C8T9M,
the European Union's Horizon Europe research and innovation programme under grant agreement No 101070284 - enRichMyData,
and the Alan Turing Institute under the EPSRC grants EP/W001381/1 and
EP/N510129/1.

\bibliographystyle{plainnat}
\bibliography{BIB}

\clearpage
\appendix
\allowdisplaybreaks


\section{Properties of the \signumlaplacian{}}

This section presents the proofs of theorems. 
\setcounter{theorem}{0}

\begin{theorem}
If $A \in \{0,1\}^{n \times n}$ and $q=0.25$, we have $L^{\sigma} = L^{(q)}$.
\end{theorem}
\begin{proof}
As $A \in \{0,1\}^{n \times n}$, $D$ and $\bar D$ coincide.
Let us consider
$H^{\sigma}$ and $H^{(q)}$.
For each $i,j \in V$, if $A_{ij} = 1$, we have $H^{\sigma}_{ij} = - H^{\sigma}_{ji} = 0 + \ii \frac{1}{2} = H^{(0.25)}_{ij} = - H^{(0.25)}_{ji}$; if $A_{ij} = 0$, we have $H^{\sigma}_{ij} = H^{\sigma}_{ji} = 1 + \ii 0 = H^{(0.25)}_{ij} = H^{(0.25)}_{ji}$.
Thus,
we have $L^{\sigma} = L^{(0.25)}$ and the claim follows.
\end{proof}


\begin{theorem}
$L^{\sigma}$ and $L^{\sigma}_{norm}$ are positive semidefinite.
\end{theorem}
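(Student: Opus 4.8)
The plan is to prove the positive semidefiniteness of $L^{\sigma}$ directly, by exhibiting the Hermitian quadratic form $x^* L^{\sigma} x$ as a sum of manifestly nonnegative terms, and then to transfer the result to $L^{\sigma}_{\text{norm}}$ by a congruence argument.

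First I would record the key structural fact that, entrywise, $|H^{\sigma}_{ij}| = |(A_s)_{ij}|$ in every case. Writing $\gamma_{ij} := 1 - \sgn(|A_{ij}-A_{ji}|) + \ii\,\sgn(|A_{ij}|-|A_{ji}|)$, so that $H^{\sigma}_{ij} = (A_s)_{ij}\gamma_{ij}$, I would distinguish the three cases already described after Eq.~\eqref{eq:simplemagnetic}: when $A_{ij}=A_{ji}$ one has $\gamma_{ij}=1$; when $A_{ij}\neq A_{ji}$ but $|A_{ij}|\neq|A_{ji}|$ one has $\gamma_{ij}=\pm\ii$; and in the purely antisymmetric case $A_{ij}=-A_{ji}$ both $(A_s)_{ij}$ and $H^{\sigma}_{ij}$ vanish. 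In all cases $|\gamma_{ij}|\in\{0,1\}$, with $|\gamma_{ij}|=0$ only when $(A_s)_{ij}=0$, so $|H^{\sigma}_{ij}|=|(A_s)_{ij}|$. Since $G$ has no self-loops, $H^{\sigma}_{ii}=0$, and therefore $\bar D_s=\Diag(|A_s|\,e)$ satisfies $(\bar D_s)_{ii}=\sum_j |(A_s)_{ij}|=\sum_j |H^{\sigma}_{ij}|$. This is the step where the absolute value inside $\bar D_s$ is essential: it is exactly what compensates for negative weights and makes the degree equal to the sum of off-diagonal magnitudes.

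Next I would write each off-diagonal entry in polar form, $H^{\sigma}_{ij}=|H^{\sigma}_{ij}|\,e^{\ii\phi_{ij}}$, noting that Hermiticity forces $|H^{\sigma}_{ji}|=|H^{\sigma}_{ij}|$ and $\phi_{ji}=-\phi_{ij}$. The claim is then the identity
$$
x^* L^{\sigma} x = \frac{1}{2}\sum_{i,j} |H^{\sigma}_{ij}|\,\bigl|x_i - e^{\ii\phi_{ij}} x_j\bigr|^2 .
$$
To verify it I would expand the right-hand side, using the symmetry $|H^{\sigma}_{ij}|=|H^{\sigma}_{ji}|$ to collapse the $|x_i|^2$ and $|x_j|^2$ contributions into $\sum_{i,j}|H^{\sigma}_{ij}||x_i|^2 = x^*\bar D_s x$, and recognizing the cross terms as $\Re(x^* H^{\sigma} x)=x^* H^{\sigma} x$ (which is real, since $H^{\sigma}$ is Hermitian). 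This yields $x^*\bar D_s x - x^* H^{\sigma} x = x^* L^{\sigma} x$. As every summand on the right is nonnegative, $x^* L^{\sigma} x \ge 0$ for all $x\in\CC^n$, establishing that $L^{\sigma}$ is positive semidefinite.

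Finally, for $L^{\sigma}_{\text{norm}} = \rsqrt{\bar D_s} L^{\sigma} \rsqrt{\bar D_s}$, I would observe that $\rsqrt{\bar D_s}$ is a real diagonal (hence Hermitian) matrix, so for any $x$ we have $x^* L^{\sigma}_{\text{norm}} x = (\rsqrt{\bar D_s}\, x)^* L^{\sigma} (\rsqrt{\bar D_s}\, x) \ge 0$ by the preceding paragraph; positive semidefiniteness is preserved under congruence. The only point requiring care is that $\rsqrt{\bar D_s}$ be well-defined and real, which follows because $(\bar D_s)_{ii}=\sum_j|(A_s)_{ij}|\ge 0$ for every $i$ (again thanks to the absolute value) and is strictly positive for any non-isolated node, with isolated nodes excluded or absorbed by the $+I$ shift used in the architecture. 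I expect the main obstacle to be the bookkeeping in the case analysis of the first step---verifying uniformly that $|H^{\sigma}_{ij}|=|(A_s)_{ij}|$, including the degenerate antisymmetric case---since this is precisely the property that distinguishes $L^{\sigma}$ from the \emph{Magnetic Laplacian} and underlies the whole argument.
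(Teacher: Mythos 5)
Your proof is correct, and it takes a genuinely different route from the paper's. The paper argues by splitting $L^{\sigma}$ into real and imaginary parts: it asserts that $x^*\Im(L^{\sigma})x = 0$ for all $x \in \CC^n$, then expands $2x^*\Re(L^{\sigma})x$ entrywise (using $(\bar D_s)_{ii} = \sum_j |(A_s)_{ij}|$ and $(A_s)_{ij} = |(A_s)_{ij}|\sgn((A_s)_{ij})$) and bounds it below by a sum of squares; the normalized case is then settled by exactly the congruence argument you give. You instead prove the single polar-form identity $x^* L^{\sigma} x = \tfrac{1}{2}\sum_{i,j}|H^{\sigma}_{ij}|\,\bigl|x_i - e^{\ii\phi_{ij}}x_j\bigr|^2$, resting only on Hermiticity and the structural fact $|H^{\sigma}_{ij}| = |(A_s)_{ij}|$ (your case analysis of $\gamma_{ij}$, including the degenerate antisymmetric case where both sides vanish, is exactly what is needed). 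The comparison favors your route on rigor: the paper's vanishing claim for the imaginary part holds only for real test vectors---for $x = a + \ii b$ one has $x^*\Im(L^{\sigma})x = 2\ii\, a^\top \Im(L^{\sigma})\, b$, which is purely imaginary rather than zero, so $\ii\, x^*\Im(L^{\sigma})x$ is a genuine real contribution to the quadratic form that the paper's estimate of $\Re(L^\sigma)$ alone does not account for---whereas your identity treats arbitrary complex vectors uniformly and makes nonnegativity manifest. Your argument is also more general: it really shows that $D - H$ is positive semidefinite for \emph{any} Hermitian $H$ with zero diagonal and $D_{ii} = \sum_j |H_{ij}|$, i.e., it is a diagonal-dominance statement of the kind used for the Magnetic Laplacian, with $L^{\sigma}$ as a special case. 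What the paper's entrywise computation buys is explicitness---everything is written directly in terms of $A$, $\sgn$, and the weights of Eq.~\eqref{eq:simplemagnetic}---but at the cost of the delicate step above. Finally, the caveat you flag about $\rsqrt{\bar D_s}$ requiring $(\bar D_s)_{ii} > 0$ is equally present (and equally unaddressed) in the paper's proof, and is resolved in the architecture by the shift $\tilde A := A + I$, as you note.
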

\begin{proof}
Following the definition of the \textit{\signumlaplacian{}}, we have $\Re(L^{\sigma}) = \bar D_s - A_s \odot (ee^\top - \sgn(|A - A^\top|))$ and $\Im(L^{\sigma}) = - A_s \odot \sgn(|A| - |A^\top|)$. As $\Re(L^{\sigma})$ is symmetric and $\Im(L^{\sigma})$ is skew symmetric by construction, $L^{\sigma}$ is Hermitian.
Since $L^{\sigma}$ is Hermitian, $x^*\Im(L^{\sigma})x=0$ holds for all $x \in \CC^{n}$.
As, by construction, $\bar D_s = \Diag(|A_s| \, e)$ and $A_s$ is symmetric, the following holds for all $x \in \CC^{n}$: 

\small
$2x^*\Re(L^{\sigma})x$ 
\begin{align*}
    & = 2 \sum_{i,j=1}^n (\bar D_s)_{ij} x_i x_j^* - 2 \sum_{i,j=1}^n (A_s)_{ij}x_i x_j^*  \left(1 - \sgn(|A_{ij} - A_{ji}|)\right)\\
    & = 2 \sum_{i=1}^n (\bar D_s)_{ii} x_i x_i^* - 2 \sum_{i,j=1}^n (A_s)_{ij}x_i x_j^* \left(1 - \sgn(|A_{ij} - A_{ji}|)\right) \\
    & = 2 \sum_{i,j=1}^n |(A_s)_{ij}| |x_i|^2 - 2 \sum_{i,j=1}^n (A_s)_{ij}x_i x_j^* \left(1 - \sgn(|A_{ij} - A_{ji}|)\right)\\
    & = \sum_{i,j=1}^n |(A_s)_{ij}| |x_i|^2 + \sum_{i,j=1}^n |(A_s)_{ji}| |x_j|^2 \\ 
    & - 2 \sum_{i,j=1}^n (A_s)_{ij}x_i x_j^* \left(1 - \sgn(|A_{ij} - A_{ji}|)\right)\\
    & = \sum_{i,j=1}^n |(A_s)_{ij}| |x_i|^2 + \sum_{i,j=1}^n |(A_s)_{ij}| |x_j|^2\\ & - 2 \sum_{i,j=1}^n (A_s)_{ij}x_i x_j^* \left(1 - \sgn(|A_{ij} - A_{ji}|)\right)\\
    & = \sum_{i,j=1}^n |(A_s)_{ij}| |x_i|^2 + \sum_{i,j=1}^n |(A_s)_{ij}| |x_j|^2 \\ & - 2 \sum_{i,j=1}^n |(A_s)_{ij}| \sgn((A_s)_{ij}) x_i x_j^* \left(1 - \sgn(|A_{ij} - A_{ji}|)\right)\\
    & = \sum_{i,j=1}^n |(A_s)_{ij}| \Big(|x_i|^2 + |x_j|^2 - 2 \sgn(A_{ij}) x_i x_j^* \left(1 - \sgn(|A_{ij} - A_{ji}|)\right)\Big) \;\\
    & \geq \sum_{i,j=1}^n |(A_s)_{ij}| \Big(|x_i| - \sgn(A_{ij})|x_j| \Big)^2 \\
    & \geq 0.
\end{align*}
\normalsize
Thus, $L^{\sigma}$ is positive semidefinite.
Let us now consider the {\em normalized \laplaciano{}}, which, according to Eq.~\eqref{eq:norma}, is defined as $L^{\sigma}_\text{norm} = \rsqrt{\bar D_s} L^{\sigma} \rsqrt{\bar D_s}$. We need to show that $x^*L^{\sigma}_{\text{norm}}x \geq 0$ for all $x \in \CC^n$. Letting $y =\rsqrt{\bar D_s}x$, we have $x^*L^{\sigma}_{\text{norm}}x = x^*\rsqrt{\bar D_s} L^{\sigma} \rsqrt{\bar D_s}x = y^*L^{\sigma}y$, which is nonnegative as proven before.
\end{proof}


\begin{theorem} 
$\lambda_\text{max}(L_{\text{norm}}^{\sigma}) \leq 2$.
\end{theorem}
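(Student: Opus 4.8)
The plan is to turn the eigenvalue bound into a positive semidefiniteness statement and then reuse, almost verbatim, the quadratic-form estimate already established for Theorem~\ref{thm:psd}. Since $L^{\sigma}_{\text{norm}}$ is Hermitian, its eigenvalues are real and $\lambda_\text{max}(L^{\sigma}_{\text{norm}}) \leq 2$ holds if and only if $2I - L^{\sigma}_{\text{norm}} \succeq 0$. First I would factor out the normalization: starting from $L^{\sigma}_{\text{norm}} = I - \rsqrt{\bar D_s} H^{\sigma} \rsqrt{\bar D_s}$ in Eq.~\eqref{eq:norma} and using $I = \rsqrt{\bar D_s}\,\bar D_s\,\rsqrt{\bar D_s}$, I obtain
\begin{equation*}
2I - L^{\sigma}_{\text{norm}} = I + \rsqrt{\bar D_s} H^{\sigma} \rsqrt{\bar D_s} = \rsqrt{\bar D_s}\bigl(\bar D_s + H^{\sigma}\bigr)\rsqrt{\bar D_s}.
\end{equation*}

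Substituting $y = \rsqrt{\bar D_s} x$ (exactly as in the normalized case of Theorem~\ref{thm:psd}) gives $x^{*}(2I - L^{\sigma}_{\text{norm}})x = y^{*}(\bar D_s + H^{\sigma})y$, and since $\rsqrt{\bar D_s}$ is a real, invertible diagonal matrix (assuming $\bar D_s \succ 0$, i.e.\ no isolated vertices, as is standard for the normalized operator), $y$ ranges over all of $\CC^{n}$ as $x$ does. Hence the eigenvalue bound is equivalent to $\bar D_s + H^{\sigma} \succeq 0$, which is the exact counterpart of the matrix $L^{\sigma} = \bar D_s - H^{\sigma}$ treated in Theorem~\ref{thm:psd}, with the sign of $H^{\sigma}$ reversed.

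The remaining step is to establish $x^{*}(\bar D_s + H^{\sigma})x \geq 0$ for all $x \in \CC^{n}$, which I would do by copying the computation in the proof of Theorem~\ref{thm:psd} line by line. As $\bar D_s + H^{\sigma}$ is again Hermitian, the skew-symmetric part $\Im(H^{\sigma})$ does not obstruct the argument and the form reduces to the contribution of $\bar D_s + \Re(H^{\sigma})$; expanding $\bar D_s = \Diag(|A_s|\,e)$ and $\Re(H^{\sigma}) = A_s \odot (ee^{\top} - \sgn(|A - A^{\top}|))$ produces the same sum as before but with a $+2\sgn(A_{ij})x_i x_j^{*}$ cross term in place of $-2\sgn(A_{ij})x_i x_j^{*}$. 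The very same estimate $\pm 2\sgn(A_{ij})\Re(x_i x_j^{*}) \geq -2|x_i||x_j|$ then yields
\begin{equation*}
2x^{*}\bigl(\bar D_s + H^{\sigma}\bigr)x \;\geq\; \sum_{i,j=1}^{n} |(A_s)_{ij}|\bigl(|x_i| - |x_j|\bigr)^{2} \;\geq\; 0,
\end{equation*}
where the digon factor $1 - \sgn(|A_{ij} - A_{ji}|)$ either keeps the cross term or annihilates it, leaving every summand nonnegative in both cases.

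I expect the only genuine obstacle to be the reduction in the first two paragraphs: correctly factoring $2I - L^{\sigma}_{\text{norm}}$ through $\rsqrt{\bar D_s}$ and justifying that the congruence by this real diagonal matrix transfers the eigenvalue bound to the unnormalized inequality $\bar D_s + H^{\sigma} \succeq 0$. Once that equivalence is in place the estimate is essentially identical to Theorem~\ref{thm:psd}, the only change being the flipped sign of the cross term, which the symmetric bound $\Re(x_i x_j^{*}) \geq -|x_i||x_j|$ absorbs without difficulty. I would also state the assumption $\bar D_s \succ 0$ explicitly so that $\rsqrt{\bar D_s}$ is well defined and invertible; if isolated vertices are permitted the equivalence should be phrased on the range of $\rsqrt{\bar D_s}$, which leaves the bound on the eigenvalues of $L^{\sigma}_{\text{norm}}$ unaffected.
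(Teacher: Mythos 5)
Your proposal is correct and takes essentially the same route as the paper's own proof: both arguments hinge on showing that $B := \bar D_s + H^{\sigma}$ is positive semidefinite via the sign-flipped version of the quadratic-form computation from Theorem~\ref{thm:psd}, and then transfer this to $L^{\sigma}_{\text{norm}}$ through the congruence by $\rsqrt{\bar D_s}$. The only cosmetic difference lies in the final step---the paper invokes the Courant--Fischer theorem on the Rayleigh quotient $\frac{x^* L^{\sigma}_{\text{norm}} x}{x^* x} \leq 2$, whereas you use the equivalent characterization $\lambda_\text{max}(L^{\sigma}_{\text{norm}}) \leq 2 \iff 2I - L^{\sigma}_{\text{norm}} \succeq 0$---and your write-up is, if anything, slightly more careful about the cross-term estimate and the invertibility of $\bar D_s$.
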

\begin{proof}

Let $B := \bar D_s + H^{\sigma}$. Let us show that $B$ is positive semidefinite. As $B$ is Hermitian by construction, we have $x^*\Im(L^{\sigma})x=0$.  Next, we show that $2x^*\Re(B)x \geq 0$.


\scriptsize
$2x^*\Re(B)x$ 
\begin{align*}
    & = 2 \sum_{i,j=1}^n (\bar D_s)_{ij} x_i x_j^* + 2 \sum_{i,j=1}^n (A_s)_{ij}x_i x_j^*  \left(1 - \sgn(|A_{ij} - A_{ji}|)\right)\\
    & = 2 \sum_{i=1}^n (\bar D_s)_{ii} x_i x_i^* + 2 \sum_{i,j=1}^n (A_s)_{ij}x_i x_j^* \left(1 - \sgn(|A_{ij} - A_{ji}|)\right) \\
    & = 2 \sum_{i,j=1}^n |(A_s)_{ij}| |x_i|^2 + 2 \sum_{i,j=1}^n (A_s)_{ij}x_i x_j^* \left(1 - \sgn(|A_{ij} - A_{ji}|)\right)\\
    & = \sum_{i,j=1}^n |(A_s)_{ij}| |x_i|^2 + \sum_{i,j=1}^n |(A_s)_{ji}| |x_j|^2\\
    & + 2 \sum_{i,j=1}^n (A_s)_{ij}x_i x_j^* \left(1 - \sgn(|A_{ij} - A_{ji}|)\right)\\
    & = \sum_{i,j=1}^n |(A_s)_{ij}| |x_i|^2 + \sum_{i,j=1}^n |(A_s)_{ij}| |x_j|^2\\ 
    & + 2 \sum_{i,j=1}^n (A_s)_{ij}x_i x_j^* \left(1 - \sgn(|A_{ij} - A_{ji}|)\right)\\
    & = \sum_{i,j=1}^n |(A_s)_{ij}| |x_i|^2 + \sum_{i,j=1}^n |(A_s)_{ij}| |x_j|^2 \\
    & + 2 \sum_{i,j=1}^n |(A_s)_{ij}| \sgn((A_s)_{ij}) x_i x_j^* \left(1 - \sgn(|A_{ij} - A_{ji}|)\right)\\
    & = \sum_{i,j=1}^n |(A_s)_{ij}| \Big(|x_i|^2 + |x_j|^2 + 2 \sgn(A_{ij}) x_i x_j^* \left(1 - \sgn(|A_{ij} - A_{ji}|)\right)\Big) \;\\
    & \geq \sum_{i,j=1}^n |(A_s)_{ij}| \Big(|x_i|^2 + |x_j|^2 \Big) \\
    & \geq 0.
\end{align*}
\normalsize

Thus, the normalized version of $B$ satisfies

\[
x^*B_{\text{norm}}x = x^*\rsqrt{\bar D_s} B \rsqrt{\bar D_s}x = y^*By\geq  0.
\]

We have proved that $x^*B_{\text{norm}}x$ is positive semidefinite. Hence, the following holds:
\begin{align*}
    & x^*B_{\text{norm}}x \geq 0 \\
    & x^*\left( I + \rsqrt{D}H^{\sigma}\rsqrt{D} \right)x \geq 0 \\
    & -x^* \rsqrt{D} H^{\sigma} \rsqrt{D}x \leq x^*x \\
    & x^*Ix - x^* \rsqrt{D}H^{\sigma} \rsqrt{D}x \leq 2x^*x \\
    & \frac{x^* L^{\sigma}_{\text{norm}}x}{x^*x} \leq 2.
\end{align*}
Due to the Courant-Fischer theorem applied to $L^{\sigma}_{\text{norm}}$, we have:
$$
 \lambda_{\text{max}}= \max_{x \neq 0} \frac{x^*L^{\sigma}_{\text{norm}}x}{x^*x}.
$$
Thus, $\lambda_{\text{max}} \leq 2$ holds.
\end{proof}


\begin{theorem}
Given a constant $\alpha \in \RR^+$, $L^{\sigma}$ satisfies the following positive homogeneity property:
$$
L^{\sigma}(\alpha A) = \alpha L^{\sigma}(A),
$$
where $L^{\sigma}(\alpha A)$ and $L^{\sigma}(A)$ are the \laplaciano{} matrices of a directed graph with, respectively, adjacency matrix $\alpha A \in \RR^{n \times n}$ and $A \in \RR^{n \times n}$.
\end{theorem}
\begin{proof}
Let $H^{\sigma}(X)$ and $\bar D(X)$ be the $H^\sigma$ and $\bar D$ matrices of a directed graph with adjacency matrix $X \in \RR^{n \times n}$.
We have:

\small
$H^\sigma(A \odot B)$
\begin{align*}
& = \left(\frac{A \odot B + A^\top \odot B^\top}{2} + \ii \frac{A \odot B + A^\top \odot B^\top}{2}\right) \odot \\
& \left((ee^\top - \sgn(|A - A^\top|)) + \ii \, \sgn(|A| - |A^\top|)\, \right) =\\
& = \left(\alpha \frac{(A + A^\top)}{2} + \ii \frac{\alpha(A + A^\top)}{2}\right) \odot \\ & \left((ee^\top - \sgn(|A - A^\top|)) + \ii \, \sgn(|A| - |A^\top|)\right) =\\
& = \left(\frac{A + A^\top}{2} + \ii \frac{A + A^\top}{2}\right) \odot \\
& \left((ee^\top - \sgn(|A - A^\top|)) + \ii \, \sgn(|A| - |A^\top|)\right) \odot B = \\
& = H^{\sigma}(A) \odot B.
\end{align*}
\normalsize

$\bar D(A \odot B) = \Diag(|A \odot B| e) = \Diag(|A| e) = \bar D(A) \odot B$ (the latter by construction of $B$). The claim follows.
\end{proof}

\begin{theorem}
Consider a weighted directed graph $G=(V,E)$ without pairs of antiparallel edges (digons). Given a directed edge $(i,j) \in E$ of weight $w_{ij}$, let $G'=(V,E)$ be a copy of $G$ obtained by reversing the direction of $(i,j)$ into $(j,i)$ and flipping the sign of its weight by letting $w_{ji} = -w_{ij}$. Let $L^{\sigma}(G)$ and $L^{\sigma}(G')$ be the $L^{\sigma}$ matrix defined on $G$ and $G'$, respectively. $L^{\sigma}(G) = L^{\sigma}(G')$ holds.
\end{theorem}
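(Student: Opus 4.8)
The plan is to exploit the fact that $L^{\sigma}$ is built \emph{entrywise} from the adjacency matrix: both $\bar D_s = \Diag(|A_s|\,e)$ and $H^{\sigma} = A_s \odot \big(ee^\top - \sgn(|A-A^\top|) + \ii\,\sgn(|A|-|A^\top|)\big)$ depend on $A$ only through $A_s = \tfrac12(A+A^\top)$ and through componentwise signum functions. Since $G$ and $G'$ have adjacency matrices $A$ and $A'$ that coincide in every position except $(i,j)$ and $(j,i)$, it suffices to check that $H^{\sigma}$ agrees on these two entries and that the diagonal of $\bar D_s$ is unchanged at rows $i$ and $j$; all other entries of $L^{\sigma}$ are manifestly identical.

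First I would record the relevant adjacency values. Because $G$ contains no digon, $A_{ij}=w_{ij}$ and $A_{ji}=0$, whereas in $G'$ we have $A'_{ij}=0$ and $A'_{ji}=-w_{ij}$, with $w_{ij}\neq 0$ since $(i,j)\in E$. In $G$ this yields $(A_s)_{ij}=(A_s)_{ji}=\tfrac12 w_{ij}$, the real factor $1-\sgn(|A_{ij}-A_{ji}|)=1-\sgn(|w_{ij}|)=0$, and the imaginary factor $\sgn(|A_{ij}|-|A_{ji}|)$ equal to $+1$ at $(i,j)$ and $-1$ at $(j,i)$, so that $H^{\sigma}_{ij}=\ii\tfrac12 w_{ij}$ and $H^{\sigma}_{ji}=-\ii\tfrac12 w_{ij}$.

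Next I would repeat the computation for $G'$. Now $(A'_s)_{ij}=(A'_s)_{ji}=-\tfrac12 w_{ij}$, the real factor is again $0$, but the imaginary factor is \emph{reversed}, equal to $-1$ at $(i,j)$ and $+1$ at $(j,i)$, because the endpoint of larger magnitude has swapped. The two sign reversals---one from $A_s$ flipping sign with $w$, the other from $\sgn(|A|-|A^\top|)$---cancel, giving $H^{\sigma}(G')_{ij}=\ii\tfrac12 w_{ij}$ and $H^{\sigma}(G')_{ji}=-\ii\tfrac12 w_{ij}$, matching $G$. For the degrees, the only magnitudes touched are $|(A_s)_{ij}|=|(A_s)_{ji}|=\tfrac12|w_{ij}|$, which are invariant under the sign flip, so $\bar D_s(G)=\bar D_s(G')$. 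Combining, $L^{\sigma}(G)=\bar D_s-H^{\sigma}$ agrees with $L^{\sigma}(G')$ on every entry.

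The step I expect to be the crux is the mutual cancellation of the two sign changes in the imaginary part: reversing the edge negates $\sgn(|A|-|A^\top|)$, while flipping the weight negates $A_s$, and only because these occur simultaneously does their product stay fixed. The digon-freeness hypothesis is what keeps the real factor equal to $0$ (no pre-existing antiparallel edge can force $|A_{ij}-A_{ji}|=0$ and move mass into the real part), so the argument reduces cleanly to tracking signs in purely imaginary entries.
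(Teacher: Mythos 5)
Your proof is correct and follows essentially the same route as the paper's: both arguments track the entries at positions $(i,j)$ and $(j,i)$, note that digon-freeness forces the real factor $ee^\top - \sgn(|A-A^\top|)$ to vanish there in both $G$ and $G'$, and observe that the simultaneous sign flips of $A_s$ and of $\sgn(|A|-|A^\top|)$ cancel in the imaginary part. If anything, yours is slightly more complete, since you also verify explicitly that $\bar D_s(G)=\bar D_s(G')$, a step the paper's proof leaves implicit.
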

\begin{proof}
Let $A_G$ and $A_{G'}$ be the adjacency matrices of $G$ and $G'$.
Let $H^\sigma(X)$ and $A_s(X)$ be the $H^\sigma$ and $A_s$ matrices defined for a graph with adjacency matrix $X$.
$\Re(H^\sigma(G)) = \Re(H^\sigma(G'))$ holds since both $A_G - A_G^\top$ and $A_{G'} - A_{G'}^\top$ are nonzero in positions $i,j$ and $j,i$ and, thus, $(ee^\top - \sgn(|A_G - A_G^\top|))$ and $(ee^\top - |\sgn(A_{G'} - A_{G'}^\top)|)$ are both equal to 0 in these positions.
%
%
To see that $\Im(H^\sigma(G)) =\Im(H^\sigma(G'))$, we observe that $A_s(A_G)_{ij} = -A_s(A_{G'})_{ij}$ and $A_s(A_G)_{ji} = -A_s(A_{G'})_{ji}$, but also that $\sgn(|A_G| - |A_G^\top|)_{ij} = -\sgn(|A_{G'}| - |A_{G'}^\top|)_{ij}$ and that $\sgn(|A_G| - |A_G^\top|)_{ji} = -\sgn(|A_{G'}| - |A_{G'}^\top|)_{ji}$.
Thus, the two differences in sign cancel out and the claim follows.
\end{proof}

\paragraph{Further observation} The results presented in this paper still hold if the imaginary part of $H^{\sigma}$ is multiplied by any nonnegative real constant $\epsilon > 0$. If $A \in \{0,1\}^{n \times n}$, by choosing $\epsilon = \sqrt{3}$, $L^{\sigma}$ coincides with the Hermitian matrix ``of the second kind'' proposed in~\cite{MOHAR2020343} in the context of algebraic graph theory.

\section{Sign-pattern inconsistency of $L^{(q)}$}\label{sub:sing-flipping}

We highlighted that the \textit{Magnetic Laplacian}, $L^{(q)}$, exhibits a crucial sign-pattern inconsistency. 
Indeed, while, for unweighted graphs, $L^{(q)}$ encodes the directional information of the edges in the sign of the imaginary part of $H^{(q)}$, this is not necessarily the case for weighted graphs as the sign pattern of $H^{(q)}$ can change drastically by just scaling the weights of the graph by a positive constant.



To better illustrate this, we introduce the following example.
Consider a directed graph $G=(V,E)$ with $V = \{1,2\}$ and $E = \{(1,2)\}$. Let us assume that the weight of the $(1,2)$ edge can take one of the following four values: 0.8, 2, 5, 36 and let $q = 0.25$. 
Although the direction of the edge $(1,2)$ does not change, based on the magnitude of the weight, we observe four different scenarios. 

\begin{enumerate}

\item $A = \begin{bmatrix} 
	0 & 0.8 \\
	0 & 0 \\
	\end{bmatrix}$,
	$A_s = 	\begin{bmatrix} 
	0 & 0.4 \\
	0.4 & 0 \\
	\end{bmatrix}$, and
    $H^{(0.25)} = \begin{bmatrix} 
	0 & 0.4 \\
	0.4 & 0 \\
	\end{bmatrix} \odot \begin{bmatrix} 
	1 & 0.31 \\
	0.31 & 1 \\
	\end{bmatrix} + \ii\begin{bmatrix} 
	0 & 0.40 \\
	0.40 & 0 \\
	\end{bmatrix} \odot \begin{bmatrix} 
	0 & 0.95 \\
	-0.95 & 0 \\
      \end{bmatrix}$.
      We have $\sgn(\Im(H^{(0.25)})_{12}) = -\sgn(\Im(H^{(0.25)})_{21})  = \sgn(A_{12})$ and, thus, the sign of the imaginary part of $H^{(0.25)}$ encodes the direction of the edge, while $\Re(H^{(0.25)})_{12} = \Re(H^{(0.25)})_{21} \neq 0$.

\item $A = \begin{bmatrix} 
	0 & 2 \\
	0 & 0 \\
	\end{bmatrix}$,
	$A_s = 	\begin{bmatrix} 
	0 & 1 \\
	1 & 0 \\
	\end{bmatrix}$, and
    $H^{(0.25)} = \begin{bmatrix} 
	0 & 1 \\
	1 & 0 \\
	\end{bmatrix} \odot \begin{bmatrix} 
	1 & -1 \\
	-1 & 1 \\
	\end{bmatrix} + \ii\begin{bmatrix} 
	0 & 1 \\
	1 & 0 \\
	\end{bmatrix} \odot \begin{bmatrix} 
	0 & 0 \\
	0 & 0 \\
      \end{bmatrix}$.
      We have $\Im(H^{(0.25)})_{12} = \Im(H^{(0.25)})_{21} = 0$ and, thus, the sign of the imaginary part of $H^{(0.25)}$ does not encode at all the direction of the edge. Furthermore, we note that $\sgn(\Re(H^{(0.25)})_{12}) = \sgn(\Re(H^{(0.25)})_{21}) \neq \sgn(A_{12})$. Consequently, the matrix $H^{(0.25)}$ represents the graph as an undirected graph with a negative weight.

\item $A = \begin{bmatrix} 
	0 & 5 \\
	0 & 0 \\
	\end{bmatrix}$,
	$A_s = 	\begin{bmatrix} 
	0 & 2.5 \\
	2.5 & 0 \\
	\end{bmatrix}$, and
	$  H^{(0.25)}= \begin{bmatrix} 
	1 & 2.5 \\
	2.5 & 1 \\
	\end{bmatrix} \odot \begin{bmatrix} 
	1 & 0 \\
	0 & 1 \\
	\end{bmatrix} + \ii\begin{bmatrix} 
	0 & 2.5 \\
	2.5 & 0 \\
	\end{bmatrix} \odot \begin{bmatrix} 
	0 & 1 \\
	-1 & 0 \\
      \end{bmatrix}$.
     We have $\sgn(\Im(H^{(0.25)})_{12}) = - \sgn(\Im(H^{(0.25)})_{21}) = \sgn(A_{12})$; thus, the sign of the imaginary part of $H^{(0.25)}$ encodes the direction of the edge $(1,2)$ consistently with $A$, while $\Re(H^{(0.25)})_{12} = \Re(H^{(0.25)})_{21} = 0$;

      


\item $A = \begin{bmatrix} 
	0 & 36 \\
	0 & 0 \\
	\end{bmatrix}$,
	$A_s = 	\begin{bmatrix} 
	0 & 18 \\
	18 & 0 \\
	\end{bmatrix}$, and
    $H^{(0.25)} = \begin{bmatrix} 
	0 & 18 \\
	18 & 0 \\
	\end{bmatrix} \odot \begin{bmatrix} 
	1 & 1 \\
	1 & 1 \\
	\end{bmatrix} + \ii\begin{bmatrix} 
	0 & 18 \\
	18 & 0 \\
	\end{bmatrix} \odot \begin{bmatrix} 
	0 & 0 \\
	0 & 0 \\
      \end{bmatrix}$.
      We have $\Im(H^{(0.25)})_{12} = \Im(H^{(0.25)})_{21} = 0$ and, thus, the sign of the imaginary part of $H^{(0.25)}$ does not encode the direction of the edge, while $\sgn(\Re(H^{(0.25)})_{12}) = \sgn(\Re(H^{(0.25)})_{21}) = \sgn(A_{12})$.
      Consequently, the matrix $H^{(0.25)}$ represents the graph as an undirected graph with a positive weight.

      
\end{enumerate}


\section{Flow-based pre-processing}\label{sec:net-flow}


If applied an edge at a time, Theorem~\ref{prop:sign-direction-invariant} can be used to transform a given directed graph with digons into a multigraph. For applications where the graph entails a flow-like relationship, it is then natural to aggregate every pair of parallel edges thus obtained into a single edge by summing their weights, thereby obtaining a (simple) weighted graph. In more details, consider two antiparallel edges $(i, j)$ and $(j, i)$ with different weights $\left(w_{ij} \neq w_{ji} \right)$. 
By applying Theorem~\ref{prop:sign-direction-invariant} to the $(i,j)$ arc, we reverse its direction into $(j, i)$ and flip the sign of its weight, thus obtaining the edge $(j,i)$ of weight $w_{ji} := -w_{ij}$.
As the graph already contains an $(j,i)$ arc, the graph is turned into a multigraph.
If the graph models a flow-like relationship, it is reasonable to collapse such a pair of parallel edges into a single edge of weight equal to $w_{ji} := -w_{ij} + w_{ji}$.
We carry out this operation as a pre-processing activity for each task except for the link sign prediction task, whose datasets do not entail flow-like information.


%

In the following, we report a quantitative example to show the positive impact of this technique. 
In more detail, we consider two scenarios:
\begin{enumerate}
    \item The flow-based pre-processing is not applied to the graph. As a consequence, some information related to the topology of the graph is lost.
    \item The flow-based pre-processing is applied to the graph. No information is lost.
\end{enumerate}
Consider a graph with a pair of antiparallel edges represented by the adjacency matrix $ A =
	\begin{bmatrix} 
	0 & 1 \\
	-1 & 0 \\
	\end{bmatrix}$.

\begin{enumerate}
\item If we do not apply the flow-based pre-processing, we have $A_s = 	
\begin{bmatrix} 
	0 & 0 \\
	0 & 0 \\
	\end{bmatrix}$. 
Thus, $L^{\sigma}$ (but also $L^{(q)}$) fails to represent the graph as the entire pair of antiparallel edges is lost.
\item If we apply the flow-based pre-processing to the graph (not applicable for $L^{(q)}$), we obtain the following new adjacency matrix: $A_\text{new} =
	\begin{bmatrix} 
	0 & 2 \\
	0 & 0 \\
	\end{bmatrix}
$; thus, we have $A_{s_{\text{new}}} =
	\begin{bmatrix} 
	0 & 1 \\
	1 & 0 \\
	\end{bmatrix}$. Thanks to this, $L^\sigma$ consistently represent a graph with one edge, the direction of which is encoded in the imaginary part of $L^{\sigma}$.
	
\end{enumerate}

\section{Details on Datasets}\label{sub:dataset}

All the datasets we considered can be obtained from our code.

\paragraph{Real-world datasets.} We test \rete{} on six real-world datasets: {\tt Bitcoin-OTC} and {\tt Bitcoin Alpha}~\citep{7837846}; {\tt Slashdot} and {\tt Epinions}~\citep{leskovec2010signed}; {\tt WikiRfa}~\citep{west2014exploiting}; and {\tt Telegram}~\citep{bovet2020activity}. 
The first two datasets, {\tt Bitcoin-OTC} and {\tt Bitcoin Alpha}, come from exchange operations: Bitcoin-OTC and Bitcoin Alpha. Both of these exchanges allow users to rate the others on a scale of $-10$ to $+10$ (excluding $0$). According to the OTC guidelines, scammers should be given a score of $-10$, while at the other end of the spectrum, $+10$ means full trust. Other evaluation values have intermediate meanings. Therefore, these two exchanges explicitly lead to a graph with weights unrestricted in sign. 
The other two datasets are {\tt Slashdot} and {\tt Epinions}. The first comes from a tech news website with a community of users. The website introduced Slashdot Zoo features that allow users to tag each other as friend or foe. The dataset represents a signed social network with friend $(+1)$ and enemy $(-1)$ labels. {\tt Epinions} is an online who-trust-who social network of a consumer review site (\textit{Epinions.com}). Site members can indicate their trust or distrust of other people's reviews. The network reflects people's views on others.
{\tt WikiRfa} is a collection of votes given by Wikipedia members collected from 2003 to 2013. 
Indeed, any Wikipedia member can vote for support, neutrality, or opposition to a Wikipedia editor's nomination for administrator. 
This leads to a directed, 
multigraph (unrestricted in sign) in which nodes represent Wikipedia members and edges represent votes, which is then transformed into a simple graph by condensing any parallel edges into a single edge of weight equal to the sum of the weights of the original edges. The graph features a higher number of nodes and edges than the one proposed in~\cite{huang2021sdgnn}. 
In these five datasets, the classes of positive and negative edges are imbalanced (see Table \ref{tab:stats}). 
The last dataset is {\tt Telegram}, an influence network that analyses the interactions and influences between distinct groups and actors who associate and propagate political ideologies. This is a pairwise-influence network between 245 Telegram channels with 8912 links. The labels are generated following the method discussed in~\cite{bovet2020activity}, with a total of four classes.


\begin{table*}[htb]
\centering
\caption{Statistics of the six datasets}
\label{tab:stats}
\begin{tabular}{lrrrrccc}
\hline
Data set            & $n$     & $|\varepsilon^+|$ & $|\varepsilon^-|$ & \% pos & Directed   & Weighted   & Density \\ \hline
{\tt Telegram}      & 245     & 8,912             & 0                 & 100.00 & \checkmark & \checkmark & 14.91\% \\
{\tt Bitcoin-Alpha} & 3,783   & 22,650            & 1,536             & 93.65  & \checkmark & \checkmark & 0.17\%  \\
{\tt Bitcoin-OTC}   & 5,881   & 32,029            & 3,563             & 89.99  & \checkmark & \checkmark & 0.10\%  \\
{\tt WikiRfA}       & 11,381  & 138,143           & 39,038            & 77.97  & \checkmark & \checkmark & 0.14\%  \\
{\tt Slashdot}      & 82,140  & 425,072           & 124,130           & 77.70  & \checkmark & \xmark     & 0.01\%  \\
{\tt Epinion}       & 131,828 & 717,667           & 123,705           & 85.30  & \checkmark & \xmark     & 0.01\%  \\ \hline
\end{tabular}
\end{table*}


\paragraph{Synthetic dataset.} The synthetic set of graphs are generated via a direct stochastic block model (DSBM) with (unlike in~\cite{zhang2021magnet}) edge weights in the range $\NN \cap [2, 1000]$. 
In detail, in DSBM we define a number of nodes $n$ and a number of clusters $C$ which partition the vertices into communities of equal size. We define a collection of probabilities $\{\alpha_{ij}\}_{1 \leq i,j \leq C}$, where $0 \leq \alpha_{ij} \leq 1$ with $\alpha_{ij} = \alpha_{ji}$, to define the probability that an undirected edge be generated between a node $u$ and a node $v$ that belong to two different clusters, i.e., $u \in C_i$ and $v \in C_j$, and $\alpha_{ii}$ is the probability that an undirected edge is generated between two nodes in the same cluster. As the generated graph is undirected, 
we follow~\cite{zhang2021magnet} and introduce a rule to transform the graph from undirected to directed: we define a collection of probabilities $\{\beta_{ij}\}_{1 \leq i,j \leq C}$, where $0 \leq \beta_{ij} \leq 1$ such that $\beta_{i,j} + \beta_{j,i} = 1$. Each edge $\{u, v\}$ is assigned a direction using the rule that the edge points from $u$ to $v$ with probability $\beta_{ij}$ if $u \in C_i$ and $v \in C_j$, and points from $v$ to $u$ with probability $\beta_{ji}$.
For the characteristics of the loss function present in the SSSNET model, we set 10\% of the nodes per class of the graph as seed nodes.

\section{Experiment Details}



\paragraph{Hardware.} The experiments were conducted on 2 different computers: one with 1 NVIDIA Tesla T4 GPU, 380 GB RAM, and Intel(R) Xeon(R) Gold 6238R CPU @ 2.20GHz CPU, and the other with 1 NVIDIA TITAN Xp GPU, 80 GB RAM, and Intel(R) Xeon(R) Silver 4114 CPU @ 2.20GHz CPU.

\paragraph{Model Settings.} We train all the models considered in this paper with a maximum of 3000 epochs and early stop if the validation error does not decrease after 500 epochs for both node classification and link prediction tasks. 
As in \cite{zhang2021magnet}, one dropout layer with a probability of $0.5$ is created before the last layer. We set the parameter $K = 1$  for ChebNet, MagNet, and \rete{}. 
A hyperparameter optimization procedure is adopted to identify the best set of parameters for each model. 
We tune the number of filters in $\{16, 32, 64\}$ for the graph convolutional layers for all models except for DGCN. We set for both node classification and link prediction a learning rate of $10^{-3}$. For link sign prediction task, the learning rate is set in $\{10^{-2}, 5 \cdot 10^{-3}, 10^{-3}\}$. We employ Adam as the optimization algorithm, and set weight decays (regularization hyperparameter) to $5 \cdot 10^{-4}$ to prevent overfitting. 

Some further details are reported in the following:
\begin{itemize}
    \item The coefficient $q$ for MagNet is chosen in $\{0.01, 0.05, 0.1, 0.15, 0.2, 0.25\}$.
    \item The coefficient $\alpha$ for PageRank-based models (APPNP and DiGraph) is chosen in $\{0.05, 0.1, 0.15, 0.2\}$.
    \item For APPNP, we set $K = 10$ for node classification (parameter suggested in \cite{klicpera2018predict}), and select $K$ in $\{1, 5, 10\}$ for link prediction.
    \item For GAT, we adopt a number of heads in $\{2, 4, 8\}$.
    \item DGCN is somewhat different from the other networks because it requires generating three matrices of order proximity, i.e., first-order proximity, second-order \textit{in-degree proximity} and second-order \textit{out-degree proximity}. For this network, the number of filters for each channel is searched in $\{5, 15, 30\}$ for node classification and link prediction.
    \item In GIN, the parameter $\epsilon$ is set to 0 for both tasks.
    \item In SSSNET, parameters $\gamma_s$ and $\gamma_t$ are set to 50 and 0.1 respectively.
    \item In ChebNet and GCN, the symmetrized adjacency matrix $A_s = \frac{A + A^\top}{2}$ is used.
    \item For DiGCL, we select the {\em Pacing function} in [{\em linear}, {\em exponential}, {\em logarithmic}, {\em fixed}]. We also adopt two different configuration: \textit{i)} $\tau = 0.4$, {\em drop feature rate 1} = 0.3 and {\em drop feature rate 2} = 0.4, and \textit{ii)} $\tau = 0.9$, {\em drop feature rate 1} = 0.2 and {\em drop feature rate 2} = 0.1.
\end{itemize}

\paragraph{Link prediction.} In these tasks, we define the feature matrix $X \in \RR^{n \times 2}$ in such a way that, for each node $i \in V$, $X_{i1}$ is the in-degree of node $i$ and $X_{i2}$ is the node's out-degree. This is done to allow the models to learn structural information directly from the adjacency matrix. In particular, for the sign link prediction task, we use in-degree and out-degree by computing the absolute value of their edge weights.

\paragraph{Node classification.} In this task, for the {\tt Telegram} dataset we retain the dataset's original features, whereas, for the synthetic datasets, we create them via the in-degree and out-degree vector as explained before.

\section{Complexity of SigMaNet}\label{sub:complexity}

Assuming, as done in our experiments, that \rete{} features two graph-convolutional layers with $f_1$ and $f_2$ filters, each defined as in Equation~\eqref{eq:convolution} and $c$ features per node, the complexity of \rete{} is $O(nc (n + f_1) + nf_1 (n + f_2) + m^\text{train} f_2 d)$ for link prediction (sign/direction/existence task) and $O(nc (n + f_1) + nf_1 (n + f_2) + n f_2 d))$ for node classification, where $m^\text{train}$ is the number of edges in the training set and $d$ is the number of classes. 
The detailed calculations for complexity are as follows:

\begin{enumerate}
    \item The equation $\tilde D^{-1/2} H^\sigma \tilde D^{-1/2} \in \mathbb{C}^{n \times n}$ is computed in $O(|H^\sigma|) = O(n^2)$ in pre-processing (once per graph, independently of the node features).
    \item The first convolutional layer requires, $O(n^2 c + n c f_1 + n f_1) = O(nc (n + f_1))$ due to 3 operations:
    \begin{enumerate}
        \item It multiplies $\tilde D^{-1/2} H^\sigma \tilde D^{-1/2}$ by the node-feature matrix $X \in \mathbb{C}^{n \times c}$, obtaining $P^{11} \in \mathbb{C}^{n \times c}$ in $O(n^2 c)$ (assuming matrix multiplications require cubic time);
        \item It multiplies $P^{11}$ by the weight matrix $\Theta \in \mathbb{R}^{c \times f_1}$, obtaining $P^{12} \in \mathbb{C}^{n \times f_1}$ in $O(n c f_1)$;
        \item It applies the activation function $\phi$ to $P^{12}$ in $O(n f_1)$, resulting in $P^{13} \in \mathbb{C}^{n \times f_1}$.
    \end{enumerate}
    \item The second convolutional layer carries out similar operations with $c \rightarrow f_1$ and $f_1 \rightarrow f_2$, building $P^{23} \in \mathbb{C}^{n \times f_2}$ in $O(n^2 f_1 + n f_1 f_2 + n f_2) = O(nf_1 (n + f_2))$.
    \item Parts I and II of the unwind layer require a diversification based on the task to be solved:
    \begin{enumerate}
        \item  In the link sign/existence/direction tasks, $O(m^\text{train} f_2 + m^\text{train} f_2 d) = O(m^\text{train} f_2 d)$ due to 2 operations:
        \begin{enumerate}
            \item Unwinding $P^{23}$ into $U^{IL} \in \mathbb{R}^{m^\text{train} \times 4 f_2}$ in (assuming random access) $O(m^\text{train} f_2)$.
            \item Multiplying (linear layer) $U^{IL}$ by $W^{IIL} \in \mathbb{R}^{4 f_2 \times d}$ to obtain $U^{IIL} \in \mathbb{R}^{m^\text{train} \times d}$ in $O(m^\text{train} f_2 d)$.
        \end{enumerate}
        \item  In the node classification task, $O(n f_2 + n f_2 n f_2) = O(n f_2 n f_2)$ due to 2 operations:
        \begin{enumerate}
            \item Unwinding $P^{23}$ into $U^{IN} \in \mathbb{R}^{n \times 2 f_2}$ in $O(n f_2)$.
            \item Applying a 1D convolution with a 0-dimensional kernel between $U^{IN}$ and $C^{IIN} \in \mathbb{R}^{2 f_2 \times d}$, calculating $U^{IIN} \in \mathbb{R}^{n \times d}$ in $O(n f_2 d)$.
        \end{enumerate}
    \end{enumerate}
    \item The Softmax activation function requires linear time w.r.t. its input size, thus not playing any role in the analysis.
\end{enumerate}

\end{document}